\documentclass{article}

\usepackage[preprint]{neurips_2026}

\usepackage[utf8]{inputenc} 
\usepackage[T1]{fontenc}    
\usepackage{hyperref}       
\usepackage{graphicx}
\usepackage{url}            
\usepackage{booktabs}       
\usepackage{dsfont}
\usepackage{amsfonts}       
\usepackage{nicefrac}       
\usepackage{microtype}      
\usepackage{xcolor}         
\usepackage{natbib}
\usepackage{amsthm}
\usepackage{pifont}
\usepackage{pifont}
\usepackage{multirow}
\usepackage{float}
\usepackage{enumitem}
 \usepackage{textcomp}
\usepackage{algorithm}
\usepackage{authblk}
\usepackage{algorithmic}
 \usepackage{amsmath}

\newtheorem{theorem}{Theorem}

\title{SMILE: Bridging Continuous Optimization and Discrete Symbolic Recovery}

\author[1]{\textbf{Mansooreh Montazerin}}
\author[1]{\textbf{Antonio Ortega}}
\author[2,$\dagger$]{\textbf{Ajitesh Srivastava}}
\affil[1]{Department of Electrical and Computer Engineering, University of Southern California}
\affil[2]{Bouvé College of Health, Khoury College of Computer Science, Network Science Institute, Northeastern University}
\affil[ ]{\texttt{\{mmontaze, aortega\}@usc.edu, aji.srivastava@northeastern.edu}}

\begin{document}

\maketitle

\footnotetext{Part of this work was done while affiliated with the University of Southern California.}

\begin{abstract}
  
Symbolic regression (SR) discovers closed-form mathematical expressions from data, offering interpretability beyond black-box models. Existing methods suffer from slow convergence in combinatorial search spaces and lack mechanisms to exploit compositional structure in the data. We introduce SMILE (\textbf{S}ine, \textbf{M}ultiplication, \textbf{I}dentity, \textbf{L}ogarithm, \textbf{E}xponential), a hybrid framework that unifies continuous gradient-based optimization with discrete symbolic recovery through three stages: structural analysis of the data to identify the compositional hierarchy of the target expression, continuous optimization to learn parameters of a network that encodes the target expression using interpretable activations, and symbolic recovery through structured pruning, coefficient optimization, and rounding. This final stage distills the learned network into a compact expression with exact symbolic constants. We evaluate SMILE on SRBench across ground-truth and black-box datasets, with ablation studies validating each component. SMILE achieves the highest symbolic solution rate at the largest noise levels, demonstrating strong robustness where competing methods degrade substantially. It consistently lies on the Pareto front of accuracy versus complexity, recovering significantly simpler expressions in a fraction of the time required by the competing methods.
\end{abstract}

\section{Introduction}\label{sec:intro}

Uncovering the mathematical laws governing complex systems is a fundamental goal in science and engineering. Symbolic regression (SR) addresses this challenge by searching over both the structure and parameters of mathematical expressions to discover interpretable closed-form relationships from observed data~\cite{physics2023,marin2023}. Unlike conventional regression, which fits parameters within predefined models, and neural networks, which yield opaque representations despite strong predictive performance~\cite{cranmer-interp}, SR searches for both the structure and parameters of equations, offering a flexible, interpretable, and model-agnostic alternative~\cite{marin2023}. It has become an essential tool across scientific disciplines including fluid mechanics, molecular systems, astrophysics, and materials science~\cite{sr-astro,sr-pde,sr-dynamic,sr-material}. However, the vast combinatorial search space of candidate equations makes discovering meaningful representations inherently challenging, as overly complex models risk overfitting while overly simplistic ones may fail to capture essential dynamics~\cite{lacava}.


Existing approaches to SR can be broadly categorized based on the search space they navigate. \textit{Discrete methods}, such as genetic programming (GP) \cite{sr-gp2022, sr-gp2023}, reinforcement learning \cite{rl-spars2023,rl-spars2025}, and neural sequence-to-sequence models \cite{biggio-seq2seq2020,srscience2026}, explore the space of symbolic expressions by iteratively refining candidate solutions constructed from primitive operations. However, relying on predefined operations and sequential generation often yields overly complex models, and the symbolic discontinuity of these combinatorial landscapes prevents gradient-based guidance, leading to slow convergence. \textit{Continuous methods} reformulate SR as a numerical optimization problem, enabling gradient-based training over smooth parameter spaces. Recent approaches range from sparse regression over predefined libraries~\cite{sindy} and neural architectures with symbolic activations~\cite{eql2018,parfam} to deep learning frameworks that treat expressions as sequences, structured graphs, or samples from learned latent spaces~\cite{sr-trans2023,cranmer-interp,sr-gen2023}. While offering better scalability, these methods often require large amounts of training data, struggle with generating syntactically valid or semantically meaningful expressions, require user-specified structural choices, and offer limited interpretability during training.
Beyond these two categories, several methods incorporate elements of both paradigms — for instance, AI Feynman \cite{aifeynman} pairs continuous neural network fitting with discrete physics-inspired decomposition, while uDSR \cite{udsr} unifies neural-guided search, sparse regression, and evolutionary refinement. In these approaches, however, the discrete and continuous steps remain structurally decoupled, and the data is treated as a fitting target rather than as a source of structural cues, such as separability or compositional hierarchy, that can guide the search toward expressions consistent with physical principles.

To address these limitations, we propose \textit{SMILE}, a \textit{hybrid framework} that formulates SR as the structural optimization of a neural architecture, \textit{unifying continuous training with discrete symbolic recovery}. SMILE begins with a structural analysis of the input data through numerical approximations to mathematical operations, such as variable marginalization and residual analysis, to identify the compositional hierarchy of the underlying formula, progressively narrowing the space of candidate expressions and determining how the problem is decomposed. At its core, SMILE is a feedforward network whose hidden layers each contain five neurons with fixed symbolic activations: \textit{Sine (S)}, \textit{Multiplication (M)}, \textit{Identity (I)}, \textit{Logarithm (L)}, and \textit{Exponential (E)}, primitives reflecting the fundamental operators most frequently encountered in physical and scientific laws. Because the chosen primitives jointly span a broad class of mathematical operations, SMILE can represent complex expressions with significantly fewer layers than standard neural networks, ensuring high efficiency and transparency in the search process. We demonstrate that SMILE is a universal approximator~\cite{universal,universal1} as the combination of identity, exponential, and logarithm neurons enables SMILE to reproduce any MLP with sigmoid activations and hence approximate any continuous mapping. Symbolic recovery then proceeds through structured pruning which progressively removes connections with the least impact on accuracy, enabling the recovery of interpretable expressions. The remaining coefficients are then refit via parametric optimization and rounded to produce expressions consistent with the coefficients typically found in physical and mathematical laws. By combining gradient-based optimization with an inherently symbolic architecture and a structured recovery pipeline, SMILE achieves fast convergence without combinatorial search, requires no user-specified structural hyperparameters, and produces clean closed-form expressions directly from the trained network. 

Specifically, our main contributions are as follows: \textit{(i)} We introduce SMILE, an SR framework with five fixed interpretable activations that unifies continuous optimization with discrete symbolic recovery through a three-stage pipeline: \textit{structural analysis, continuous optimization, and symbolic recovery}. To the best of our knowledge, this data-driven structural analysis has not been explored in prior work. \textit{(ii)} We prove the expressive power of SMILE as a universal approximator and develop a tailored loss function to promote sparsity and prevent gradient instability, combined with a gating mechanism that enables structured pruning for interpretable symbolic recovery. \textit{(iii)} We conduct extensive experiments on standard SR benchmarks, demonstrating that SMILE achieves competitive exact symbolic recovery with strong robustness to noise, attaining the highest symbolic solution rate at the largest noise level, significantly lower expression complexity, and discovery time reduced from hours to minutes. Ablation studies further validate the contribution of each pipeline component.

\begin{figure}[tb]
\centering
\includegraphics[trim=0 0 0 0, clip, width=1\textwidth]{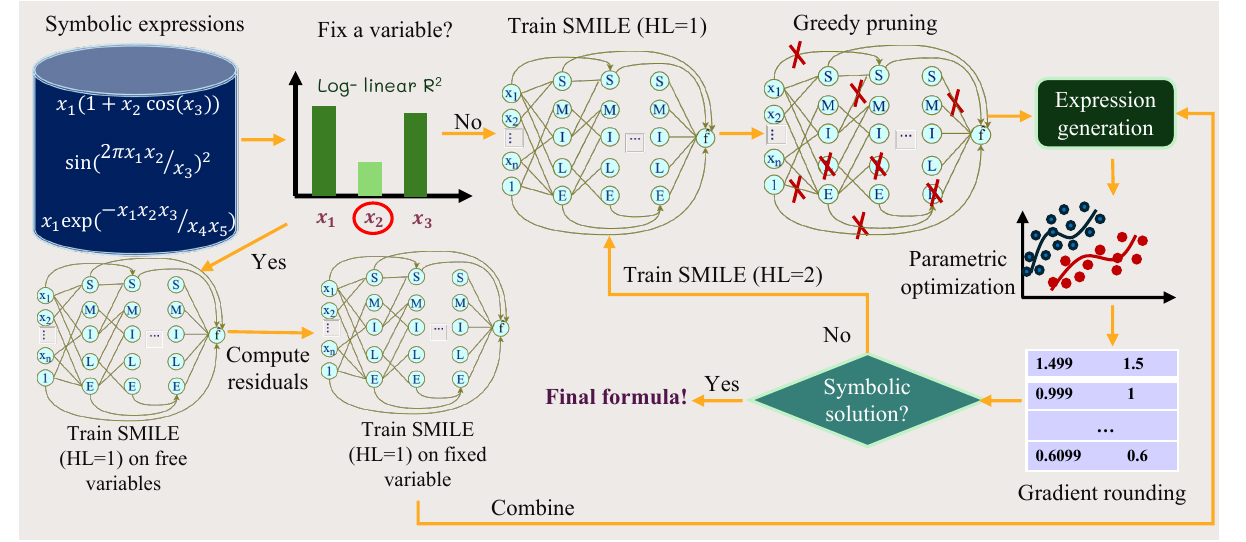}
\caption{End-to-end pipeline of the SMILE framework. The pipeline performs a log-linear $R^2$ scan to classify the problem into one of three cases: \textbf{(C1)} direct training, \textbf{(C2)} training on the inverted target, or \textbf{(C3)} decomposition into simpler subproblems. In all cases, the trained network is passed through a recovery pipeline consisting of greedy pruning, symbolic extraction, parametric optimization, and gradient-based coefficient rounding. HL denotes the number of hidden layers.}
\label{fig:pipeline}
\end{figure}

\section{Related work}\label{sec:related}

SR is a foundational technique to discover closed-form equations that best explain observed input–output relationships in the data. SR has been studied extensively through a variety of approaches that differ primarily in how they explore the space of candidate expressions.

\textbf{Discrete evolutionary search}
Genetic programming remains the most established paradigm for SR, dating back to the foundational work of Koza~\cite{koza-genetic1994}. GP evolves populations of tree-structured expressions through biologically inspired operations such as mutation, crossover, and selection. PySR~\cite{sr-gp2023} is a more recent, high-performance GP-based system that introduces a multi-population evolutionary algorithm with an evolve–simplify–optimize loop. Beyond GP, Monte Carlo Tree Search~\cite{sr-mcts2022} explores expression trees through reward-guided rollouts, while combinatorial optimization techniques~\cite{rag-sr} formulate expression discovery as a constrained search problem. GP-based methods, however, rely on heuristic exploration of a large combinatorial space, often leading to high computational cost, sensitivity to hyperparameters, and overly complex expressions.

\textbf{Neural generation and reinforcement learning} 
A parallel line of work leverages machine learning to guide symbolic search while still operating in a discrete space. RL approaches~\cite{rl-spars2023,rl-spars2025,egg-sr} train agents to sequentially construct expressions by selecting operators and operands from a predefined library. While the agent's policy is optimized via continuous objectives, the search itself remains discrete, as each step involves a categorical choice over symbolic tokens, which can lead to slow convergence and sensitivity to reward design. Neural encoder-decoder models~\cite{biggio-seq2seq2020} and transformer-based architectures such as E2E~\cite{e2e} approach SR as a direct mapping from numerical observations to symbolic expressions, leveraging pretraining on synthetic equation-data pairs. However, these models are particularly sensitive to their training distribution and often fail on expressions outside the patterns seen during pretraining~\cite{sr-trans2024}. More recently, large language models (LLMs)~\cite{srscience2026,lasr,llm-sr} have been applied to SR by leveraging internalized mathematical knowledge to generate candidate expressions through prompting. However, recent studies~\cite{llm-srbench,lasr} have shown that LLM-based methods often recall known equations from their training corpus rather than performing genuine discovery, making a fair comparison with methods that learn exclusively from the provided data infeasible. LLM-based approaches, therefore, fall outside the scope of our comparisons.

\textbf{Continuous and differentiable search}
An alternative paradigm reformulates SR as a continuous optimization problem. Sparse regression approaches such as SINDy~\cite{sindy} select a minimal subset from a predefined library via sparsity-promoting optimization, offering efficiency but limiting expressivity to the chosen dictionary. AI Feynman~\cite{aifeynman} combines neural network fitting with physics-inspired decomposition strategies, recursively simplifying problems before applying symbolic search. uDSR~\cite{udsr} integrates neural-guided search, sparse regression, and evolutionary refinement, achieving strong performance but at high computational cost and with limited interpretability, as the final expression is assembled from multiple independent strategies.
Among continuous approaches, EQL$^{\div}$~\cite{eql2018,marin2023} employs a neural architecture with fixed symbolic activations but imposes structural constraints such as restricting division to the final layer, requires multiple layers for common functions, and provides no framework for extracting compact formulae~\cite{sr-dl2019}. ParFam~\cite{parfam} parameterizes candidate expressions as compositions of rational functions optimized via basin-hopping, but both the activation functions and polynomial degrees are user-specified, and it relies on generic $\ell_1$ regularization without a mechanism for resolving floating-point coefficients into exact symbolic constants.

In contrast, SMILE uses a fixed set of symbolic activations across all problems, eliminating the need for operator selection and structural hyperparameter tuning. It distinguishes itself from methods that proceed directly to fitting by first analyzing the compositional structure of the data, training a compact network to fit the target, and then recovering closed-form expressions through structured pruning and rounding, yielding high symbolic recovery rates across standard benchmarks.

\section{Methodology}\label{sec:method}

The SMILE framework approaches SR by combining a compact neural architecture with interpretable activations with a data-driven pipeline comprising structural analysis, continuous optimization, and symbolic recovery. Rather than searching over an exponentially large space of candidate expressions, SMILE first \textit{analyzes the data to identify the compositional structure of the target expression} and selects an appropriate training configuration. This reduces the effective degrees of freedom of the network and simplifies the recovery problem. 
The network is then trained via continuous optimization, after which a sequence of discrete steps — greedy pruning, parametric optimization, and gradient-based rounding — progressively distills the trained network into an exact symbolic expression. We describe each component below, beginning with the problem formulation (\autoref{subsec:pre}), followed by the network architecture (\autoref{subsec:smile}), the training procedure (\autoref{subsec:training}), and the full pipeline (\autoref{subsec:pipeline}).

\subsection{Preliminaries}\label{subsec:pre}

In data-driven equation discovery, our objective is to find a compact and interpretable mathematical expression $\hat{f}$ that closely approximates an unknown target function \( f: \mathbb{R}^d \to \mathbb{R},\) which maps a 
$d$-dimensional input vector $\mathbf{x}$ to an output $y$. 
Given a dataset of $n$ samples \(D = \{(\mathbf{x}_i, y_i)\}_{i=1}^{n}\), SR seeks the underlying mathematical relationship between the input-output pairs such that $\hat{f}(\mathbf{x}_i)\approx y_i$ for all data samples. 
In contrast to standard regression, SR aims not only to fit the observed data accurately but also to discover an interpretable expression that generalizes to unseen inputs. We consider an expression interpretable when it is composed of primitive mathematical operations with few terms and exact symbolic constants. The SMILE framework achieves this by recovering expressions that lie on the Pareto front of accuracy and complexity, favoring the simplest form that accurately represents the underlying solution.

\subsection{The proposed SMILE architecture}\label{subsec:smile}

Unlike prior approaches that generate mathematical expressions through population-based search or sequential modeling, we design a neural network with fixed symbolic activations whose depth and training configurations are selected based on structural properties of the data (\autoref{subsec:pipeline}), and whose weights are jointly optimized to naturally simplify into symbolic expressions. The SMILE architecture is a multi-layer feedforward network in which each hidden layer contains five neurons, one for each of the primitive activations introduced in \autoref{sec:intro}. Four of these --- sine, identity, logarithm, and exponential --- are standard unary functions applied element-wise to a linear combination of their inputs. The fifth, the multiplication neuron, produces a learned product of its inputs with trainable exponents:
\begin{equation}
    m^{(i)} = \exp\!\left(\mathbf{w}^{(i)} \cdot \log\!\left(\mathbf{z}^{(i-1)}\right)\right) = \prod_j \left(z_j^{(i-1)}\right)^{w_j^{(i)}},
\end{equation}
where $m^{(i)}$ is the output of the multiplication neuron, $\mathbf{w}^{(i)}$ are the learnable real-valued weights at layer $i$, $\mathbf{z}^{(i-1)}$ is the output vector of the previous layer, and $z_j^{(i-1)}$ denotes its $j$-th component. This formulation enables the network to directly represent operations such as $x^2$, $\sqrt{x}$, $x_1 x_2$, or $x_1 / x_2$ within a single neuron. Crucially, expressing multiplication as the composition of logarithms and exponentials keeps the entire computation within the same differentiable primitives that define the rest of the network, allowing all weights, including the exponents, to be trained end-to-end via standard gradient-based optimization. Additionally, we include fully dense residual connections between all layers of the network to improve gradient flow and help the network learn a broader range of mathematical expressions in its outputs. We do not include bias terms in the layers; instead, we append a learnable bias of $1$ to the input layer.


The expressive power of the SMILE architecture is justified by the following theorem.
\begin{theorem}
SMILE networks are universal approximators.
\end{theorem}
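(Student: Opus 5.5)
The plan is to reduce the statement to the classical universal approximation theorem for one-hidden-layer sigmoidal networks~\cite{universal,universal1}. Such networks $g(\mathbf{x})=\sum_{k=1}^{K} c_k\,\sigma(\mathbf{a}_k\cdot\mathbf{x}+b_k)$ with $\sigma(t)=1/(1+e^{-t})$ are dense in $C(K)$ for every compact $K\subset\mathbb{R}^d$ in the uniform norm. It therefore suffices to show that, for every such $g$, some SMILE network computes $g$ exactly on $K$; alternatively, it can compute $g$ up to arbitrarily small uniform error. Density of SMILE networks then follows by the triangle inequality.

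The first step is to realize a single sigmoid unit with the available primitives. Take an exponential neuron whose input is the linear combination $-\mathbf{a}_k\cdot\mathbf{x}-b_k$; the bias comes from the appended constant-$1$ input. Its output is $u_k=e^{-(\mathbf{a}_k\cdot\mathbf{x}+b_k)}>0$. In the next layer, feed a multiplication neuron the linear combination $1+u_k$, using the constant input (available to every layer through the dense residual connections), with exponent $w=-1$. This produces $(1+u_k)^{-1}=\sigma(\mathbf{a}_k\cdot\mathbf{x}+b_k)$. The argument of the internal $\log$ is at least $1$, so the computation is well defined and smooth.

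If multiplication neurons are read strictly as products of previous-layer outputs, I would instead use a logarithm neuron for $\ell_k=\log(1+u_k)$ and then an exponential neuron for $\exp(-\ell_k)=\sigma(\cdot)$. Only identity, exponential and logarithm neurons are then needed. Each sigmoid unit thus costs three hidden layers, each using one neuron of the required type. The identity neurons and the residual connections carry $\mathbf{x}$, the constant, and the already-computed $\sigma$ values forward. A final linear output layer forms $\sum_k c_k\sigma_k$.

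The main obstacle is that each hidden layer has only five neurons, one per type, so the $K$ sigmoid units cannot be computed in parallel as in a wide MLP. I would handle this by serializing the construction, which is exactly where the fully dense residual connections are used. Unit $k$ occupies layers $3k-2$ through $3k$. The identity neuron of each later layer is reused as an accumulator holding the partial sum $\sum_{j\le k}c_j\sigma_j$. The unused neurons (sine, multiplication) in each layer get zero outgoing weights, or zero exponents so that they output the constant $1$, and therefore do not affect the output.

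Two technical points need checking. First, the logarithm neuron's argument must stay positive. It does here, since it is always $1+u_k\geq 1$, and on compact $K$ all intermediate quantities are bounded, so no singularity arises. Second, the depth grows linearly in $K$, which is allowed because the theorem places no bound on depth. This yields an exact representation of $g$, and density of sigmoidal networks completes the proof.
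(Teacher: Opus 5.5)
Your proof is correct, and its core matches the paper's. Both reduce to the Cybenko--Hornik theorem for sigmoidal networks and realize $\sigma(a)$ as an exponential neuron producing $e^{-a}$, then a logarithm neuron on $1+e^{-a}$, then an exponential neuron with weight $-1$. Your fallback route is exactly the paper's Sigmoid block, with the paper's intermediate identity steps absorbed into the pre-activation linear combinations. Your first route does not fit the paper's definition of the multiplication neuron. That neuron forms $\prod_j z_j^{w_j}$ over earlier outputs, so it cannot act on $1+u_k$ unless an identity neuron outputs $1+u_k$ first.

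The genuine difference is how width is obtained. The paper emulates an $L$-layer MLP and gets arbitrary width ``by replicating blocks in parallel''. Read literally, that needs several neurons of each type per layer, whereas the specified architecture has exactly one; as written, the paper's argument covers a widened variant. You instead serialize the $K$ sigmoid units along depth and let the fully dense residual connections deliver $\mathbf{x}$, the constant, and each finished $\sigma_k$ to all later layers and to the output. This respects the five-neuron layers at the cost of depth $3K$. It also makes explicit that the residual connections do essential work: without them, the single identity neuron per layer could not carry $\mathbf{x}$, the constant, and the running sum forward at once.

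Reducing to a single hidden layer also makes the paper's stacking of $L$ blocks unnecessary. You likewise do not need the paper's second, Stone--Weierstrass argument via monomials, which would face the same one-multiplication-neuron-per-layer issue.

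One bookkeeping point: in layers $3k-2$ and $3k$ the logarithm neuron is also idle. It should be fed a positive input such as the constant $1$, not merely given zero outgoing weights, so that no intermediate value is undefined.
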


\begin{proof}[Proof sketch]
We show that the SMILE network can emulate any multilayer perceptron (MLP) with sigmoid activations. A single SMILE block can compute the sigmoid function $\sigma(a) = 1/(1 + \exp(-a))$ using only the identity, exponential, and logarithm neurons, as illustrated in \autoref{fig:sig}a. The identity neurons can also reproduce arbitrary affine transformations between layers. By stacking $L$ such blocks (\autoref{fig:sig}b), we recover an $L$-layer MLP with sigmoid activations, which is a known universal approximator~\cite{universal,universal1}. The full proof is given in~\autoref{app:universality}.
\end{proof}

\begin{figure}[h!]
\centering
  \includegraphics[trim=0 0 0 0, clip, width=0.95\linewidth]{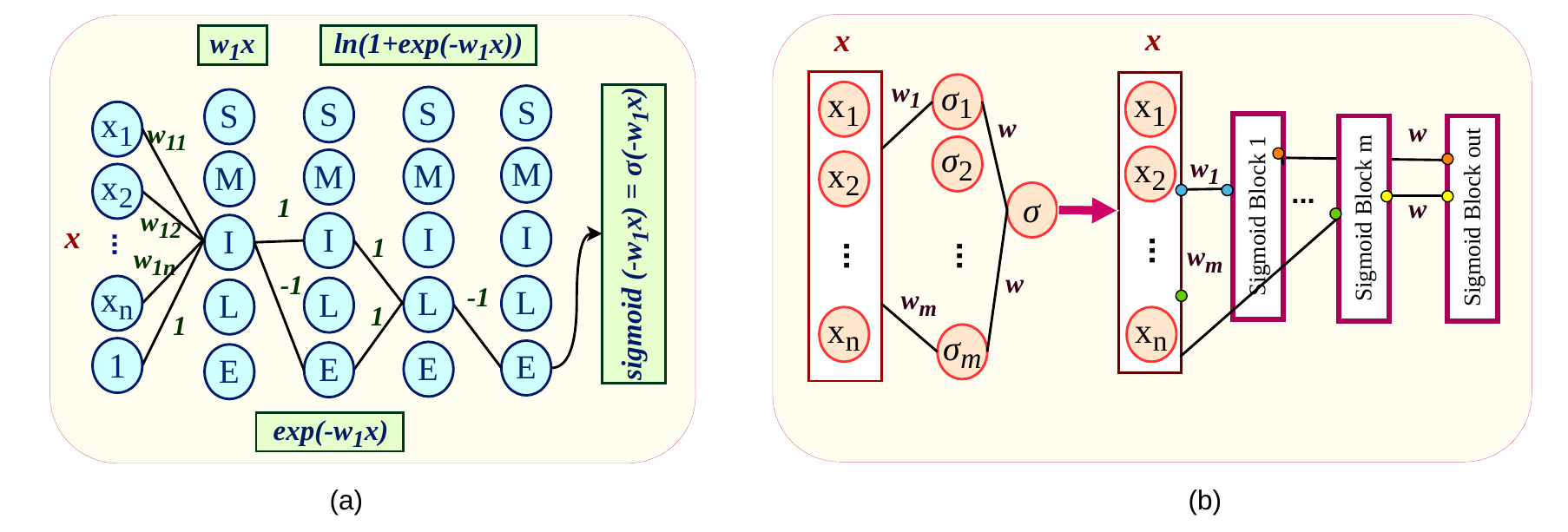}
  \caption{(a) Approximation of the Sigmoid function using a SMILE-based configuration. (b) Emulation of an $L$-layer MLP into an equivalent SMILE network using stacked SMILE layers and Sigmoid blocks.}
  \label{fig:sig}
\end{figure}

While the SMILE architecture is expressive enough to represent a wide range of symbolic expressions, not all neurons and connections are necessary for any given target function. To retain only the components that contribute to the target expression, we include a differentiable gating mechanism within the network that enables the pruning stage to systematically remove inactive paths:

\textbf{Gating mechanism}
To encourage the network to use only the connections and activations necessary to represent the target expression, we introduce two types of learnable gates in each layer, inspired by differentiable gating mechanisms for sparse structure learning~\cite{gate2018}. An \textit{edge gate} $G^{(i)}$, sharing the same shape as the weight matrix $\mathbf{W}^{(i)}$, modulates individual connections by replacing the raw weight with $\mathbf{W}^{(i)} \odot \sigma(G^{(i)})$, where $\sigma$ denotes the sigmoid function. An \textit{activation gate} $A_k^{(i)}$, $k \in \{1, \dots, 5\}$, one per neuron, scales the entire neuron output by $\sigma(A_k^{(i)})$. Both gate logits are initialized at zero, corresponding to $\sigma = 0.5$, and are learned jointly with the network weights. Conceptually, edge gates enable pruning of individual connections, while activation gates enable a stronger form of pruning by removing entire neurons from the computation graph. The sigmoid provides a smooth, differentiable relaxation of discrete selection, allowing gate values to be optimized jointly with the network weights via standard gradient-based training. When combined with sparsity-inducing penalties in the loss function (\autoref{subsec:training}), this design encourages gates to converge toward zero or one, yielding naturally sparse networks for symbolic extraction.

\subsection{Training and optimization}\label{subsec:training}
With the architecture and gating mechanism defined, we train the SMILE network end-to-end using gradient-based optimization. The goal of training is to learn the weight matrices and gate parameters of the SMILE network such that the composition of symbolic activations, governed by these learned coefficients, captures the underlying mathematical relationship in the data. We define the objective function as the sum of three terms, detailed below. 

The principal part of the objective function is the \textit{mean squared error} to fit the observed data by minimizing the $\mathcal{L}_{\text{MSE}} = \frac{1}{N} \sum_{i=1}^{N} ( \hat{y}_i - y_i )^2$, where $\hat{y}_i$ is the predicted output, $y_i$ is the target value, and $N$ is the batch size.

The logarithmic and exponential activations pose numerical challenges during training: the exponential function and its derivatives grow rapidly at high input values, leading to gradient instability, while the logarithm is undefined for non-positive inputs and its derivatives become unstable near zero. To address these issues while preserving differentiability, we apply input clamping to both functions:
$\operatorname{L}(x) = \ln(\max(x, x_l))$ and $\operatorname{E}(x) = \exp(\min(x, x_e))$,
where $x_l$ and $x_e$ are predefined thresholds. To discourage the network from relying on clamped regions, we \textit{penalize inputs that fall outside the valid domain}:
\begin{equation}
\mathcal{L}_{\text{clamp}} = \sum_{i} \mathds{1}_{\{x_{i,\log} < x_l\}} \cdot \left| x_l - x_{i,\log} \right| + \sum_{i} \mathds{1}_{\{x_{i,\exp} > x_e\}} \cdot \left| x_e - x_{i,\exp} \right|,
\end{equation}
where $x_{i,\log}$ and $x_{i,\exp}$ denote the inputs of sample $i$ to the logarithmic and exponential activations, respectively.

As described in~\autoref{subsec:smile}, the SMILE network uses edge and activation gates to control weight and node sparsity. To drive these gates toward binary decisions during training, we denote $\mathcal{G}$ as the set of all gate logits across every layer and define the \textit{gate penalty} as:
\begin{equation}
\mathcal{L}_{\text{gate}} = \sum_{g \in \mathcal{G}} \left|\sigma(g)\right| + \sum_{g \in \mathcal{G}} \sigma(g)\left(1-\sigma(g)\right),
\end{equation}
where $\sigma(\cdot)$ is the sigmoid function. The first term applies $\ell_1$ shrinkage, pushing gate values toward zero. The second term penalizes intermediate gate values — since $\sigma(g)(1-\sigma(g))$ is maximized at $\sigma(g) = \tfrac{1}{2}$, it encourages gates to converge to a clean binary decision of either fully open or fully closed.

Finally, the total loss is $\mathcal{L}_{\text{total}} = \mathcal{L}_{\text{MSE}} + \lambda_{\text{clamp}} \, \mathcal{L}_{\text{clamp}} + \lambda_{\text{gate}} \, \mathcal{L}_{\text{gate}}$, where $\lambda_{\text{clamp}}$ and $\lambda_{\text{gate}}$ control the relative contribution of each penalty term.

\subsection{Data-driven discovery pipeline}\label{subsec:pipeline}

As outlined at the beginning of this section, the structural analysis stage infers properties of the unknown expression through mathematical reasoning and determines whether to decompose the problem into simpler subproblems. Following continuous optimization, the symbolic recovery stage distills the learned network into a compact closed-form expression through pruning, parametric optimization, and coefficient rounding. Each step is described below, and the complete procedure is formalized in Algorithms~\ref{alg:stage1} and~\ref{alg:stage2} in~\autoref{app:pipeline}.


\textbf{Variable analysis and decomposition}
Many target functions in scientific domains exhibit compositional structure, where individually simple components combine through multiplication or addition to form expressions that are difficult to recover in a single pass~\cite{dc-sr}. A broad class of such functions can be written as $f(\mathbf{x}_{\setminus k}) \cdot g(x_k)$ or $f(\mathbf{x}_{\setminus k}) + g(x_k)$, where $f$ and $g$ are individually simple but their combination is not. Rather than increasing network depth to capture such expressions directly, we decompose the problem along the variable responsible for the additional structure. The pipeline handles three cases: \textbf{(C1)} the expression is already a simple product of powered variables and can be recovered directly, \textbf{(C2)} the expression contains a denominator that causes numerical instability during training, in which case we invert the target to convert it into a more tractable form, and \textbf{(C3)} a variable enters the expression through an additive or multiplicative composition, in which case the problem is split into two simpler subproblems. To determine which case applies, we analyze the dataset $D = \{(\mathbf{x}_i, y_i)\}_{i=1}^{n}$ by independently regressing $\log|y|$ against $\log|x_k|$ for each input variable $x_k$. This tests whether the target function follows a power-law relationship across all variables. If all variables yield comparable high $R^2$ values, the expression falls under case \textbf{(C1)} and the pipeline proceeds directly to training. For expressions involving denominators, the clamping on the logarithm and exponential activations can cause relevant weights to shrink prematurely before the network learns a meaningful structure. Therefore, if the log-linear fit is strong, the pipeline additionally tests whether inverting the target improves the conditioning of the problem by comparing low-degree polynomial fits to $y$ and $1/y$. If the inverted form yields a better fit, the expression falls under case \textbf{(C2)}; training proceeds on $1/y$ which converts denominators into numerators for more stable optimization. The final expression is then inverted back after recovery. 

If one variable's log-linear fit yields a significantly lower $R^2$ than the rest, it indicates that this variable breaks the power-law pattern, triggering case \textbf{(C3)}. The pipeline then fixes this variable at two distinct values to isolate the simpler relationship among the remaining variables. In continuously sampled datasets such as the Feynman benchmark~\cite{aifeynman}, exact matches at a specific value of $x_k$ are rare. We therefore define a window $[x^* - \delta, \; x^* + \delta]$ around the fixation point $x^*$ within which $x_k$ is treated as approximately constant. The key requirement is that the variation in $y$ caused by $x_k$ within this window remains negligible: $|y(\mathbf{x}_{\setminus k}, x^* + \delta) - y(\mathbf{x}_{\setminus k}, x^* - \delta)| / \mathrm{std}(y) < \tau$ for fixed $\mathbf{x}_{\setminus k}$. By first-order approximation, this reduces to $\left|\frac{\partial y}{\partial x_k}\right| \cdot 2\delta \, / \, \mathrm{std}(y) < \tau$ (details in Appendix~\ref{app:varfix}). The largest window satisfying this criterion is selected automatically, and the data within it is passed to the first SMILE network, which recovers $f(\mathbf{x}_{\setminus k})$ as if $x_k$ were truly fixed.

Once $f(\mathbf{x}_{\setminus k})$ is recovered, we determine whether $x_k$ enters the full expression multiplicatively or additively. At each of the two fixation points, we compute the multiplicative residuals $r = y / f(\mathbf{x}_{\setminus k})$. If the ratio of these residuals across the two fixation points is approximately constant, this indicates that $y \approx f(\mathbf{x}_{\setminus k}) \cdot g(x_k)$, and the relationship is multiplicative. Otherwise, we default to the additive form $y \approx f(\mathbf{x}_{\setminus k}) + g(x_k)$, computing residuals as $y - f(\mathbf{x}_{\setminus k})$. Multiplicative and additive separability tests have been explored in prior work~\cite{aifeynman}, though our approach operates directly on the recovered symbolic subexpression rather than on neural network approximations of the data. A second SMILE network is then trained on the resulting residuals to recover $g(x_k)$. Finally, the two components are combined into a single expression, either $f \cdot g$ or $f + g$, and passed to the recovery pipeline for symbolic extraction.

\textbf{Greedy pruning}
After training, the network encodes a continuous approximation of the target function, but many of the gated connections and neurons may carry negligible contributions. To distill the network into a compact symbolic expression, we apply a greedy pruning strategy that operates on the gates introduced in \autoref{subsec:smile}. At each iteration, we tentatively close the gate with the smallest impact on the training $R^2$ by forcing $\sigma(g) \approx 0$. If the resulting $R^2$ drop remains below a predefined tolerance, the closure is accepted, and the procedure continues. This process iterates until no further gates can be closed without exceeding the tolerance. Since pruning operates on both edge and activation gates, it removes individual connections as well as entire neurons, progressively reducing the trained network into a minimal subgraph. The remaining active weights and neurons are then composed layer by layer to produce a closed-form expression whose coefficients are the floating-point values learned during training.

\textbf{Parametric optimization}
Since pruning alters the network structure, the remaining coefficients may no longer be optimal for the simplified expression. We therefore refit all numerical constants in the extracted expression via least-squares optimization on the training data.

\textbf{Gradient rounding}
The parametric optimization yields an expression with floating-point coefficients, whereas most scientific laws involve simple integers, rationals, or well-known constants such as $\pi$. To recover exact symbolic constants, we introduce a gradient-based rounding mechanism. For each coefficient $c_i$ and a candidate rounded value $r_i$, we estimate the change in the expression's output using a first-order approximation (\autoref{thm:grad-round}, proof in Appendix~\ref{app:rounding}). Rounding is accepted when $|c_i - r_i| \cdot \left| \frac{\partial \hat{f}}{\partial c_i}(r_i, \mathbf{x}) \right| < \tau$ for all $\mathbf{x} \in S$, where $\tau$ is a predefined tolerance. Coefficients satisfying this criterion are snapped to their nearest simple value, producing the final interpretable expression.

\section{Experiments and results}\label{sec:results}

\textbf{Benchmark}
We evaluate SMILE on the Symbolic Regression Benchmark (SRBench)~\cite{lacava}, built on the Penn Machine Learning Benchmark (PMLB)~\cite{pmlb}, an open-source collection of machine learning datasets. The benchmark comprises 119 equations from the Feynman Lectures on Physics~\cite{aifeynman}, 14 ODE-based problems from the Strogatz database~\cite{strogatz}, and 57 black-box regression problems with unknown underlying equations. Following prior work, we restrict evaluation on the black-box problems to those with continuous features and input dimension $d \leq 10$. We compare against 15 methods included in the original SRBench evaluation, extended with PySR~\cite{sr-gp2023}, uDSR~\cite{udsr}, E2E~\cite{e2e}, and ParFam~\cite{parfam}. We also perform ablation studies to isolate the contribution of individual pipeline components. Details on the datasets and baselines are provided in Appendix~\ref{app:datasets} and~\ref{app:baselines}, respectively. Hyperparameter settings and computational resources are reported in Appendix~\ref{app:hyperparams}. We exclude EQL$^{\div}$~\cite{eql2018} from our comparison as its restricted operator set prevents it from expressing a substantial portion of the benchmark equations, making a direct comparison on the full SRBench unfair (see Appendix~\ref{app:baselines} for details).


\textbf{Metrics}
Following the SRBench evaluation protocol~\cite{lacava}, we report two metrics for ground-truth problems: the symbolic solution rate (SSR), defined as the percentage of equations exactly recovered by an algorithm, and the accuracy solution rate, defined as the percentage of problems achieving $R^2 > 0.999$, where $R^2 = 1 - \sum_{i=1}^{N}(y_i - \hat{y}_i)^2 / \sum_{i=1}^{N}(y_i - \bar{y})^2$, $\hat{y}_i$ is the model prediction, and $\bar{y}$ is the mean of the target values. All results are averaged over three independent trials. We additionally evaluate under noise by adding $\epsilon_i \sim \mathcal{N}(0, \sigma^2 \frac{1}{N}\sum_{i=1}^{N} y_i^2)$ to the targets, where $\sigma$ denotes the noise level. For the black-box problems, where ground-truth expressions are unavailable, we report the median $R^2$ and median complexity as defined in~\cite{lacava}.

\begin{figure}[htb]
\centering
\includegraphics[width=\linewidth]{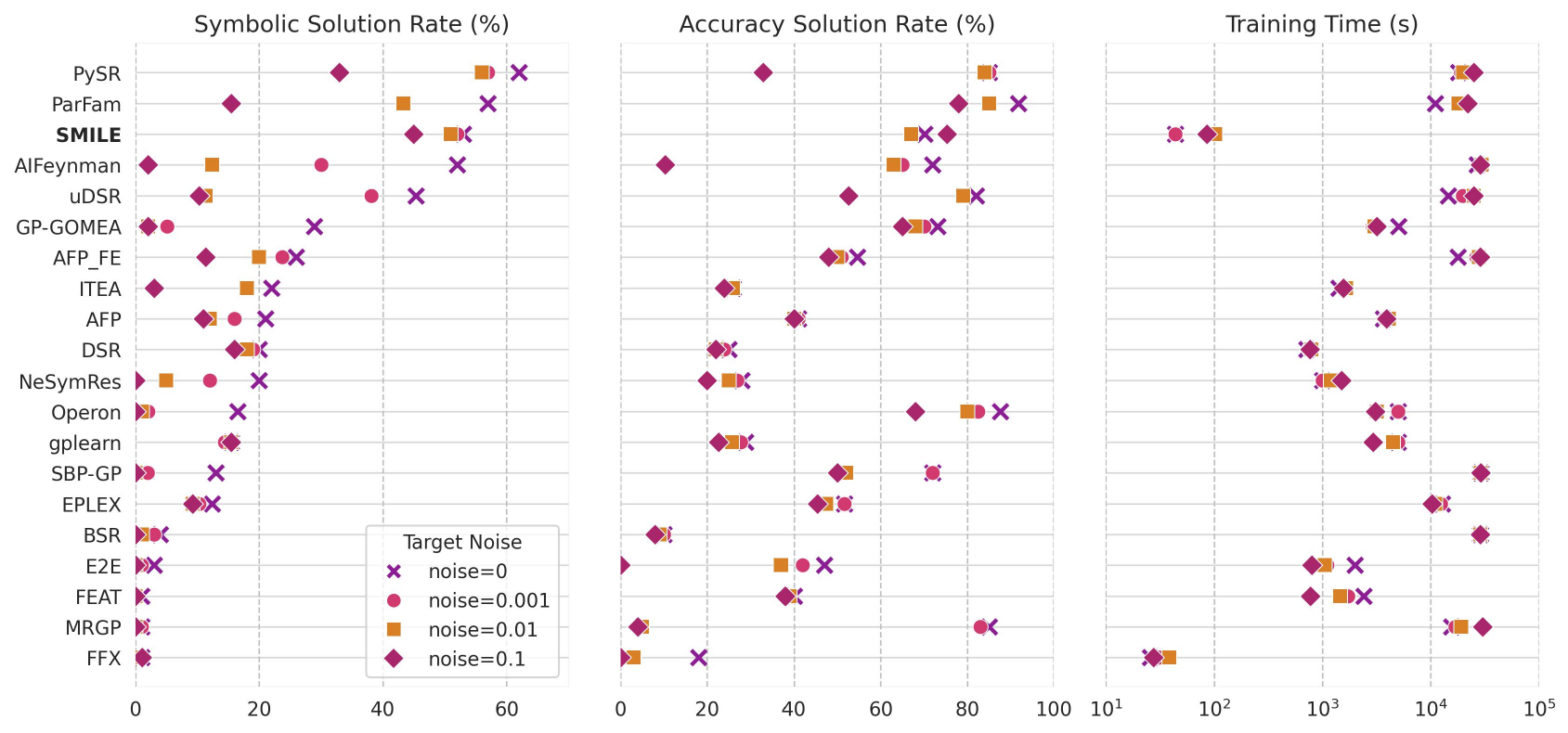}
\caption{Comparison of SMILE and baselines on the Feynman dataset in terms of SSR, accuracy solution rate ($R^2 > 0.999$), and training time.}
\label{fig:ssr-asr-time-fey}
\end{figure}

\begin{figure}[bt]
\centering
\includegraphics[width=0.9\linewidth]{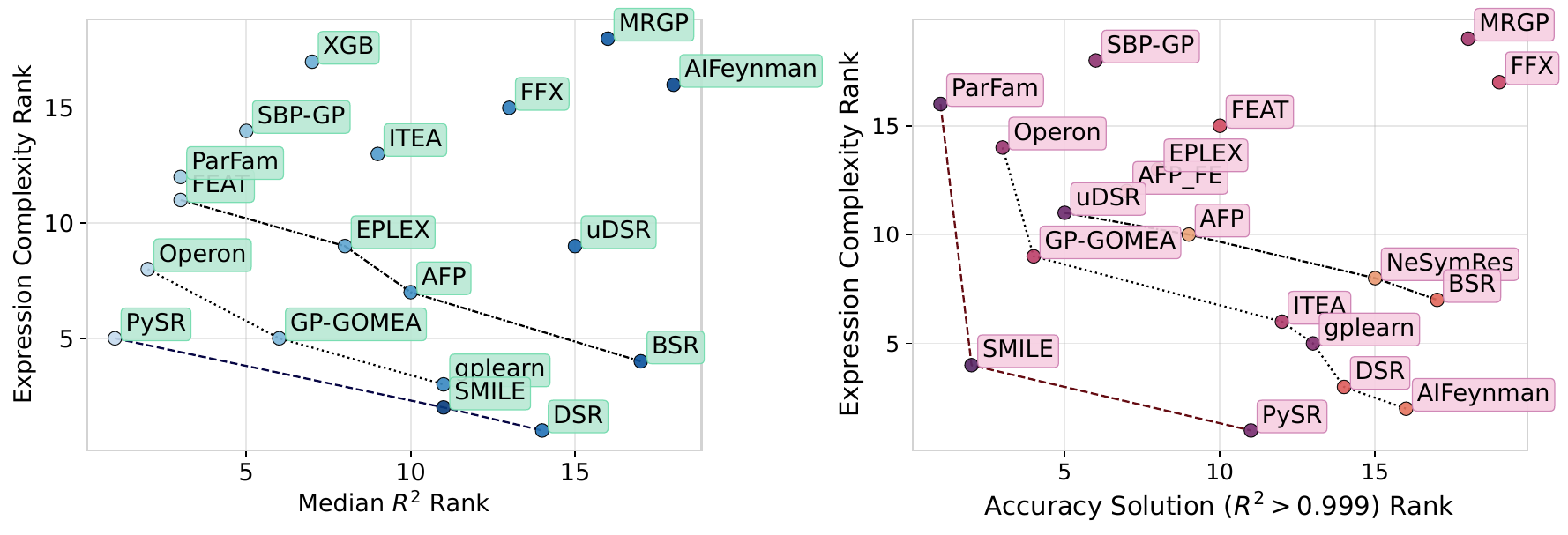}
\caption{Pareto front of expression complexity versus accuracy on the Feynman dataset (right, accuracy solution rate) and the black-box dataset (left, median $R^2$).}
\label{fig:pareto-com}
\end{figure}

\textbf{Ground-truth results}
\autoref{fig:ssr-asr-time-fey} compares SMILE against all baselines on the Feynman dataset with no noise and noise levels $\sigma \in \{0.001, 0.01, 0.1\}$ in terms of SSR, accuracy solution rate, and training time (s). On noise-free data, SMILE ranks third in SSR behind PySR and ParFam. However, SMILE is significantly more robust to noise: as the noise level increases, SMILE's SSR decreases by at most 8\%, compared to more than 30\% for ParFam and PySR, making SMILE the \textit{top-performing method at the highest noise level}. Regarding accuracy solution rate ($R^2 > 0.999$), SMILE performs lower than the top baselines. This is because the pipeline operates with limited depth to maintain a tractable pruning space, favoring exact symbolic recovery over regression accuracy. In terms of the training time, SMILE is significantly faster, recovering each expression in the order of minutes compared to several hours for ParFam, PySR, and AI Feynman. The Pareto front on the Feynman dataset (\autoref{fig:pareto-com}, right) shows that SMILE lies on the Pareto front alongside ParFam and PySR, while producing expressions of considerably lower complexity. On the Strogatz dataset (\autoref{fig:ssr-fey-stro}), SMILE achieves the second highest SSR after PySR on noise-free data, while being more robust to noise across all noise levels. Additional results, including median $R^2$ at increasing precision levels, accuracy solution rate comparisons between the Feynman and Strogatz datasets, and per-problem analysis are provided in Appendix~\ref{app:groundtruth}.

\begin{figure}[htb]
\centering
\includegraphics[width=0.8\linewidth]{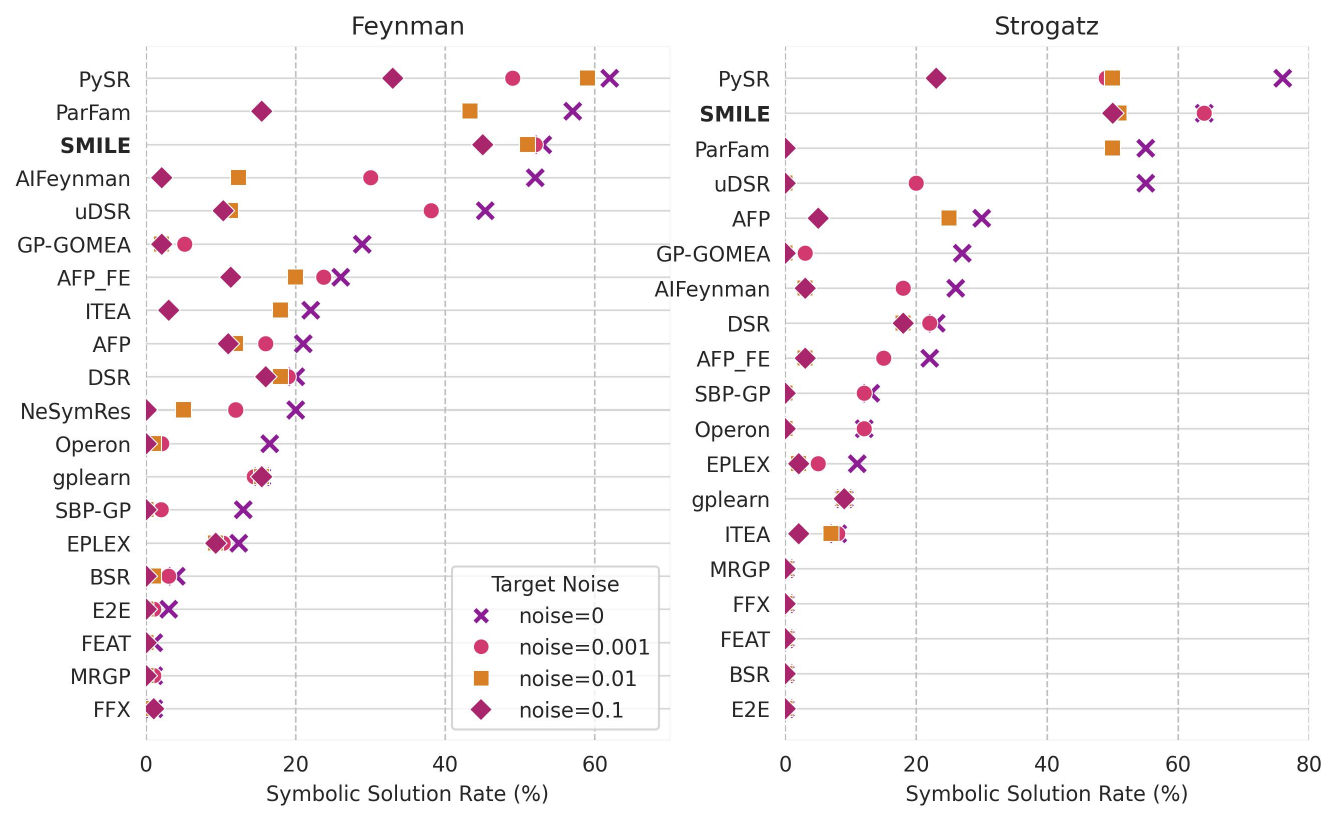}
\caption{SSR on the Feynman and Strogatz datasets across all evaluated baselines.}
\label{fig:ssr-fey-stro}
\end{figure}

\textbf{Black-box results}
The 57 black-box problems from PMLB span diverse real-world regression tasks where no ground-truth expression is available, so we report the median $R^2$ and expression complexity following \cite{lacava}. As shown in the left panel of \autoref{fig:pareto-com}, SMILE lies on the Pareto front of median $R^2$ versus complexity, achieving competitive accuracy while producing expressions of significantly lower complexity. Across both the Feynman and black-box datasets, SMILE consistently appears on the Pareto front, confirming that it maintains a favorable trade-off between accuracy and expression complexity regardless of whether ground-truth expressions are available. Detailed comparisons of median $R^2$, complexity, and training time are provided in Appendix \ref{app:blackbox}.

To evaluate the contribution of each pipeline component, we remove one component at a time and re-run the pipeline on the Feynman dataset. Removing greedy pruning or parametric optimization causes approximately 35\% SSR reduction, as remaining coefficients overwhelm downstream recovery stages. Removing variable analysis or gradient-based rounding reduces SSR by approximately 15\% with increased complexity. Detailed analysis is provided in Appendix~\ref{app:ablation}.

\section{Discussion and conclusion}

In this work, we presented SMILE, a hybrid SR framework built on a fixed set of interpretable activations, with an end-to-end pipeline integrating continuous optimization with discrete symbolic recovery. Our experiments on SRBench demonstrate several key strengths: SMILE ranks among the top methods in SSR across all noise levels, and achieves the highest SSR at the largest noise level, where competing methods degrade substantially. This robustness stems from the discrete recovery components — pruning and rounding — which naturally discard small or unstable parameters likely arising from noise, combined with the structural analysis stage that identifies compositional patterns. SMILE lies on the Pareto front of accuracy versus complexity across both ground-truth and black-box datasets, recovering significantly simpler expressions in a fraction of the time required by leading baselines, as validated through ablation studies. A current limitation is that SMILE favors shallow architectures to maintain a tractable pruning space. While increasing depth improves accuracy and $R^2$, it does not improve SSR, as the added degrees of freedom increase expression complexity and make pruning significantly harder. Additionally, the data-driven analysis assumes compositional patterns common in physical and mathematical laws, which may not hold for arbitrary real-world relationships. Future work could incorporate adaptive depth selection and extend the analysis stage to handle more general compositional forms.

\newpage
\bibliographystyle{unsrt}
\bibliography{references.bib}

@article{physics2023,
  title={A computational framework for physics-informed symbolic regression with straightforward integration of domain knowledge},
  author={Keren, Liron Simon and Liberzon, Alex and Lazebnik, Teddy},
  journal={Scientific Reports},
  volume={13},
  number={1},
  pages={1249},
  year={2023},
  publisher={Nature Publishing Group UK London}
}

@article{marin2023,
  title={Deep learning and symbolic regression for discovering parametric equations},
  author={Zhang, Michael and Kim, Samuel and Lu, Peter Y and Solja{\v{c}}i{\'c}, Marin},
  journal={IEEE Transactions on Neural Networks and Learning Systems},
  year={2023},
  publisher={IEEE}
}

@article{sr-material,
  title={Symbolic regression in materials science},
  author={Wang, Yiqun and Wagner, Nicholas and Rondinelli, James M},
  journal={MRS communications},
  volume={9},
  number={3},
  pages={793--805},
  year={2019},
  publisher={Cambridge University Press}
}

@article{sr-dynamic,
  title={Discovering governing equations from data by sparse identification of nonlinear dynamical systems},
  author={Brunton, Steven L and Proctor, Joshua L and Kutz, J Nathan},
  journal={Proceedings of the national academy of sciences},
  volume={113},
  number={15},
  pages={3932--3937},
  year={2016},
  publisher={National Academy of Sciences}
}

@article{sr-pde,
  title={Data-driven discovery of partial differential equations},
  author={Rudy, Samuel H and Brunton, Steven L and Proctor, Joshua L and Kutz, J Nathan},
  journal={Science advances},
  volume={3},
  number={4},
  pages={e1602614},
  year={2017},
  publisher={American Association for the Advancement of Science}
}

@article{sr-astro,
  title={Deep symbolic regression for physics guided by units constraints: toward the automated discovery of physical laws},
  author={Tenachi, Wassim and Ibata, Rodrigo and Diakogiannis, Foivos I},
  journal={The Astrophysical Journal},
  volume={959},
  number={2},
  pages={99},
  year={2023},
  publisher={IOP Publishing}
}

@article{cranmer-interp,
  title={Discovering symbolic models from deep learning with inductive biases},
  author={Cranmer, Miles and Sanchez Gonzalez, Alvaro and Battaglia, Peter and Xu, Rui and Cranmer, Kyle and Spergel, David and Ho, Shirley},
  journal={Advances in neural information processing systems},
  volume={33},
  pages={17429--17442},
  year={2020}
}

@article{aifeynman,
  title={AI Feynman: A physics-inspired method for symbolic regression},
  author={Udrescu, Silviu-Marian and Tegmark, Max},
  journal={Science advances},
  volume={6},
  number={16},
  pages={eaay2631},
  year={2020},
  publisher={American Association for the Advancement of Science}
}

@article{lacava,
  title={Contemporary symbolic regression methods and their relative performance},
  author={La Cava, William and Burlacu, Bogdan and Virgolin, Marco and Kommenda, Michael and Orzechowski, Patryk and de Fran{\c{c}}a, Fabr{\'\i}cio Olivetti and Jin, Ying and Moore, Jason H},
  journal={Advances in neural information processing systems},
  volume={2021},
  number={DB1},
  pages={1},
  year={2021}
}

@inproceedings{sr-gp2022,
  title={Taylor genetic programming for symbolic regression},
  author={He, Baihe and Lu, Qiang and Yang, Qingyun and Luo, Jake and Wang, Zhiguang},
  booktitle={Proceedings of the genetic and evolutionary computation conference},
  pages={946--954},
  year={2022}
}

@article{sr-gp2023,
  title={Interpretable machine learning for science with PySR and SymbolicRegression. jl},
  author={Cranmer, Miles},
  journal={arXiv preprint arXiv:2305.01582},
  year={2023}
}

@article{sr-trans2024,
  title={Symformer: End-to-end symbolic regression using transformer-based architecture},
  author={Vastl, Martin and Kulh{\'a}nek, Jon{\'a}{\v{s}} and Kubal{\'\i}k, Ji{\v{r}}{\'\i} and Derner, Erik and Babu{\v{s}}ka, Robert},
  journal={IEEE Access},
  year={2024},
  publisher={IEEE}
}

@article{sr-trans2023,
  title={Transformer-based planning for symbolic regression},
  author={Shojaee, Parshin and Meidani, Kazem and Barati Farimani, Amir and Reddy, Chandan},
  journal={Advances in Neural Information Processing Systems},
  volume={36},
  pages={45907--45919},
  year={2023}
}

@inproceedings{sr-gen2023,
  title={Deep generative symbolic regression with monte-carlo-tree-search},
  author={Kamienny, Pierre-Alexandre and Lample, Guillaume and Lamprier, Sylvain and Virgolin, Marco},
  booktitle={International Conference on Machine Learning},
  pages={15655--15668},
  year={2023},
  organization={PMLR}
}

@article{koza-genetic1994,
  title={Genetic programming as a means for programming computers by natural selection},
  author={Koza, John R},
  journal={Statistics and computing},
  volume={4},
  pages={87--112},
  year={1994},
  publisher={Springer}
}

@article{sr-dl2019,
  title={Deep symbolic regression: Recovering mathematical expressions from data via risk-seeking policy gradients},
  author={Petersen, Brenden K and Landajuela, Mikel and Mundhenk, T Nathan and Santiago, Claudio P and Kim, Soo K and Kim, Joanne T},
  journal={arXiv preprint arXiv:1912.04871},
  year={2019}
}

@article{biggio-seq2seq2020,
  title={A seq2seq approach to symbolic regression},
  author={Biggio, Luca and Bendinelli, Tommaso and Lucchi, Aurelien and Parascandolo, Giambattista},
  journal={Learning Meets Combinatorial Algorithms at NeurIPS2020},
  year={2020}
}

@inproceedings{eql2018,
  title={Learning equations for extrapolation and control},
  author={Sahoo, Subham and Lampert, Christoph and Martius, Georg},
  booktitle={International Conference on Machine Learning},
  pages={4442--4450},
  year={2018},
  organization={Pmlr}
}

@article{sindy,
  title={Discovering governing equations from data by sparse identification of nonlinear dynamical systems},
  author={Brunton, Steven L and Proctor, Joshua L and Kutz, J Nathan},
  journal={Proceedings of the national academy of sciences},
  volume={113},
  number={15},
  pages={3932--3937},
  year={2016},
  publisher={National Academy of Sciences}
}

@article{rl-spars2023,
  title={Efficient symbolic policy learning with differentiable symbolic expression},
  author={Guo, Jiaming and Zhang, Rui and Peng, Shaohui and Yi, Qi and Hu, Xing and Chen, Ruizhi and Du, Zidong and Li, Ling and Guo, Qi and Chen, Yunji and others},
  journal={Advances in Neural Information Processing Systems},
  volume={36},
  pages={36278--36304},
  year={2023}
}

@article{rl-spars2025, title={Noise-Resilient Symbolic Regression with Dynamic Gating Reinforcement Learning}, volume={39}, number={19}, journal={Proceedings of the AAAI Conference on Artificial Intelligence}, author={Sun, Chenglu and Shen, Shuo and Tao, Wenzhi and Xue, Deyi and Zhou, Zixia}, year={2025}, pages={20690-20698} }

@inproceedings{sr-mcts2022,
  title={Symbolic physics learner: Discovering governing equations via monte carlo tree search},
  author={Sun, Fangzheng and Liu, Yang and Wang, Jian-Xun and Sun, Hao},
  booktitle={International Conference on Learning Representations},
  year={2022}
}

@inproceedings{rag-sr,
  title={RAG-SR: Retrieval-augmented generation for neural symbolic regression},
  author={Zhang, Hengzhe and Chen, Qi and Banzhaf, Wolfgang and Zhang, Mengjie and others},
  booktitle={The Thirteenth International Conference on Learning Representations},
  year={2025}
}

@inproceedings{srscience2026,
  title={Sr-scientist: Scientific equation discovery with agentic ai},
  author={Xia, Shijie and Sun, Yuhan and Liu, Pengfei},
  booktitle={International Conference on Learning Representations},
  year={2026}
}

@inproceedings{egg-sr,
  title={EGG-SR: Embedding Symbolic Equivalence into Symbolic Regression via Equality Graph},
  author={Jiang, Nan and Wang, Ziyi and Xue, Yexiang},
  booktitle={International Conference on Learning Representations},
  year={2026}
}

@article{e2e,
  title={End-to-end symbolic regression with transformers},
  author={Kamienny, Pierre-Alexandre and d'Ascoli, St{\'e}phane and Lample, Guillaume and Charton, Fran{\c{c}}ois},
  journal={Advances in Neural Information Processing Systems},
  volume={35},
  pages={10269--10281},
  year={2022}
}

@article{lasr,
  title={Symbolic regression with a learned concept library},
  author={Grayeli, Arya and Sehgal, Atharva and Costilla-Reyes, Omar and Cranmer, Miles and Chaudhuri, Swarat},
  journal={Advances in Neural Information Processing Systems},
  volume={37},
  pages={44678--44709},
  year={2024}
}

@inproceedings{llm-sr,
  title={Llm-sr: Scientific equation discovery via programming with large language models},
  author={Shojaee, Parshin and Meidani, Kazem and Gupta, Shashank and Farimani, Amir Barati and Reddy, Chandan K},
  booktitle={International Conference on Learning Representations},
  year={2025}
}

@inproceedings{llm-srbench,
  title={Llm-srbench: A new benchmark for scientific equation discovery with large language models},
  author={Shojaee, Parshin and Nguyen, Ngoc-Hieu and Meidani, Kazem and Farimani, Amir Barati and Doan, Khoa D and Reddy, Chandan K},
  booktitle={International Conference on Machine Learning},
  year={2025}
}

@article{udsr,
  title={A unified framework for deep symbolic regression},
  author={Landajuela, Mikel and Lee, Chak Shing and Yang, Jiachen and Glatt, Ruben and Santiago, Claudio P and Aravena, Ignacio and Mundhenk, Terrell and Mulcahy, Garrett and Petersen, Brenden K},
  journal={Advances in Neural Information Processing Systems},
  volume={35},
  pages={33985--33998},
  year={2022}
}

@inproceedings{parfam,
  title={ParFam--(Neural Guided) Symbolic Regression via Continuous Global Optimization},
  author={Scholl, Philipp and Bieker, Katharina and Hauger, Hillary and Kutyniok, Gitta},
  booktitle={The Thirteenth International Conference on Learning Representations},
  year={2025}
}

@article{universal,
  title={Approximation by superpositions of a sigmoidal function},
  author={Cybenko, George},
  journal={Mathematics of control, signals and systems},
  volume={2},
  number={4},
  pages={303--314},
  year={1989},
  publisher={Springer}
}

@article{universal1,
  title={Approximation capabilities of multilayer feedforward networks},
  author={Hornik, Kurt},
  journal={Neural Networks},
  volume={4},
  number={2},
  pages={251--257},
  year={1991}
}

@inproceedings{gate2018,
  title={Learning sparse neural networks through $ L\_0 $ regularization},
  author={Louizos, Christos and Welling, Max and Kingma, Diederik P},
  booktitle={The Sixth International Conference on Learning Representations},
  year={2018}
}

@article{dc-sr,
  title={Divide and conquer: A quick scheme for symbolic regression},
  author={Luo, Changtong and Chen, Chen and Jiang, Zonglin},
  journal={International Journal of Computational Methods},
  volume={19},
  number={08},
  pages={2142002},
  year={2022}
}

@article{pmlb,
  title={PMLB: a large benchmark suite for machine learning evaluation and comparison},
  author={Olson, Randal S and La Cava, William and Orzechowski, Patryk and Urbanowicz, Ryan J and Moore, Jason H},
  journal={BioData mining},
  volume={10},
  number={1},
  pages={36},
  year={2017},
  publisher={Springer}
}

@article{strogatz,
  title={Inference of compact nonlinear dynamic models by epigenetic local search},
  author={La Cava, William and Danai, Kourosh and Spector, Lee},
  journal={Engineering Applications of Artificial Intelligence},
  volume={55},
  pages={292--306},
  year={2016},
  publisher={Elsevier}
}

@article{jin2019,
  title={Bayesian symbolic regression},
  author={Jin, Ying and Fu, Weilin and Kang, Jian and Guo, Jiadong and Guo, Jian},
  journal={arXiv preprint arXiv:1910.08892},
  year={2019}
}

\newpage
\appendix
\section*{Appendix}
\setcounter{theorem}{0}

\section{Proof of universal approximation}
\label{app:universality}

\begin{theorem}
SMILE networks are universal approximators.
\end{theorem}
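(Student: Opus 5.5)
We prove the statement by reduction to the classical universal approximation theorem for one-hidden-layer sigmoidal networks~\cite{universal,universal1}. Precisely, we will show that for every compact $K\subset\mathbb{R}^d$, every continuous $f:K\to\mathbb{R}$ and every $\varepsilon>0$ there is a SMILE network $N$ (for a suitable choice of weights, with unused gates closed and used gates open) such that $\sup_{K}|N-f|<\varepsilon$. By Cybenko's theorem there exist $M$, $\alpha_j,\theta_j\in\mathbb{R}$ and $\mathbf{a}_j\in\mathbb{R}^d$ with $\sup_K\bigl|f(\mathbf{x})-\sum_{j=1}^M\alpha_j\,\sigma(\mathbf{a}_j\cdot\mathbf{x}+\theta_j)\bigr|<\varepsilon/2$. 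It therefore suffices to realise each term $\sigma(\mathbf{a}_j\cdot\mathbf{x}+\theta_j)$ by a SMILE subnetwork to accuracy $\varepsilon/(2\sum_j|\alpha_j|)$, and then to sum these terms with the identity neuron of a final layer.

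\textbf{Sigmoid block.} Gate values are at our disposal: we set the gate logits to $\pm\infty$, or take them large, absorbing $\sigma(\cdot)\approx 1$ into the weights. We realise $\sigma$ by the chain
\[
a=\mathbf{a}_j\cdot\mathbf{x}+\theta_j,\qquad u=\exp(-a),\qquad v=\log(1+u),\qquad \sigma(a)=\exp(-v).
\]
The first linear combination uses the appended constant input $1$ as a bias. The neurons used are, in order, the identity neuron, the exponential neuron, the logarithm neuron (with the constant $1$ passed forward through the residual connections), and a final exponential neuron. Equivalently, the multiplication neuron with exponent $-1$ applied to $1+u$ yields $1/(1+u)$ directly.

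\textbf{Clamping.} The clamping in $\operatorname{E}$ and $\operatorname{L}$ is harmless on compact sets. Since $K$ is compact, $a$ ranges over a bounded interval $[-B,B]$. Hence $u\in[e^{-B},e^{B}]$, the input $1+u$ to the logarithm is at least $1$, which we assume exceeds $x_l$, and the inputs $-a$ and $-v$ to the exponentials lie in $[-B,B]$ and $(-\infty,0]$ respectively. If $e^B$ or $B$ exceeds $x_e$, we rescale: write $\sigma(a)$ using $\exp(-a/2)^2$, realised via the multiplication neuron, or more simply divide the weights $\mathbf{a}_j,\theta_j$ by a constant and compensate by repeated products. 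In the idealised setting where the thresholds are parameters that may be taken as large as needed, the identities hold exactly on $K$. I expect this domain and clamping bookkeeping to be the main technical obstacle, and I would handle it this way.

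\textbf{Width.} Each SMILE layer has only five neurons, so the $M$ sigmoid units cannot be computed in parallel within one layer. Instead we compute them sequentially in depth and accumulate the running sum. The dense residual connections carry $\mathbf{x}$, the constant $1$, and the partial sum $S_k=\sum_{j\le k}\alpha_j\sigma_j$ forward to all later layers. At each stage the identity neuron forms $S_{k}=S_{k-1}+\alpha_k\sigma_k$, while the other neurons evaluate the next sigmoid block. Alternatively, one may view the network as in \autoref{fig:sig}b as $L$ stacked blocks emulating an MLP whose layers have arbitrary width, interpreting a SMILE ``layer'' of the emulation as several consecutive SMILE layers.

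\textbf{Conclusion.} Exactness of each block on $K$, together with the Cybenko approximation, gives $\sup_K|N-f|<\varepsilon$. The same construction applied layer by layer reproduces any deep sigmoid MLP, which completes the argument.
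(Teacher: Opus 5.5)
Your proof is correct and follows the paper's main argument: build $\sigma(a)=\exp(-\log(1+\exp(-a)))$ from the identity, exponential and logarithm neurons, using the appended constant $1$, and reduce to the universality of sigmoidal MLPs. Your extra bookkeeping fills in details the paper's proof glosses over: absorbing gate factors into the weights, avoiding the $x_e$ clamp via $\exp(-a/k)^k$ through the multiplication neuron (assuming, as its definition $\prod_j z_j^{w_j}$ suggests, that it is not itself clamped), and realising width sequentially in depth through the dense residual connections rather than by the paper's ``replicating blocks in parallel'', which does not literally fit five-neuron layers.
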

\begin{proof}
We establish universality by showing that an SMILE network can emulate any standard multilayer perceptron (MLP) with one or more hidden layers of arbitrary width and sigmoid activations.

\textbf{Sigmoid construction}
We show that a single SMILE block can compute the sigmoid function $\sigma(a) = 1/(1 + \exp(-a))$. As illustrated in \autoref{fig:sig}a, the construction proceeds as follows:
\begin{enumerate}[nosep, leftmargin=*]
    \item The identity neuron computes the linear combination $a = \mathbf{w}^\top \mathbf{x}$.
    \item The exponential neuron computes $\exp(-a)$.
    \item The identity neuron adds the constant $1$ (provided by the appended input), yielding $1 + \exp(-a)$.
    \item The logarithm neuron computes $\log(1 + \exp(-a))$.
    \item The exponential neuron with a weight of $-1$ computes $\exp(-\log(1 + \exp(-a))) = \frac{1}{1 + \exp(-a)} = \sigma(a)$.
\end{enumerate}
We refer to this configuration as a \emph{Sigmoid block}.

\textbf{ MLP emulation.}
By stacking $L$ Sigmoid blocks as shown in \autoref{fig:sig}b, with identity neurons handling the intermediate affine transformations, we recover an $L$-layer MLP with sigmoid activations and arbitrary width (by replicating blocks in parallel). Since MLPs with sigmoid activations and at least one hidden layer of sufficient width are universal approximators~\cite{universal,universal1}, it follows that SMILE networks are also universal approximators.

\textbf{Polynomial emulation}
Additionally, universal approximability can be established through an alternative argument: the multiplication neuron, combined with the identity neuron, enables SMILE to represent any monomial $x_1^{a_1} x_2^{a_2} \cdots x_d^{a_d}$ in a single layer, and linear combinations of such terms via the identity neurons recover arbitrary polynomial functions. Since polynomials are dense in the space of continuous functions on compact sets (by the Stone--Weierstrass theorem), this provides a second, independent proof of universal approximation.
\end{proof}

\section{Pipeline details}\label{app:pipeline}
The complete SMILE discovery pipeline is formalized in Algorithms~\ref{alg:stage1} and~\ref{alg:stage2}. Algorithm~\ref{alg:stage1} describes the data analysis and decomposition stage, which determines whether the problem can be solved directly or requires variable decomposition, and selects the appropriate network depth accordingly. Algorithm~\ref{alg:stage2} describes the symbolic recovery stage, which distills the trained network into a closed-form expression through greedy pruning, parametric optimization, and gradient-based rounding.

\begin{algorithm}[htb]
\caption{SMILE Stage 1: Variable Analysis and Decomposition}\label{alg:stage1}
\begin{algorithmic}[1]
\REQUIRE Dataset $D = \{(\mathbf{x}_i, y_i)\}_{i=1}^{n}$, tolerance $\tau_{\text{window}}$
\ENSURE Trained SMILE network(s)
\FOR{each input variable $x_k$}
    \STATE Regress $\log|y|$ against $\log|x_k|$ and record $R^2_k$
\ENDFOR
\IF{all $R^2_k$ are comparably high}
    \STATE Compare polynomial fits to $y$ and $1/y$
    \IF{inverted fit is better}
        \STATE Set target $\leftarrow 1/y$
    \ELSE
        \STATE Set target $\leftarrow y$
    \ENDIF
    \STATE Train SMILE with 1 hidden layer $\rightarrow$ apply Algorithm~\ref{alg:stage2}
    \IF{exact symbolic solution found}
        \RETURN $\hat{f}$
    \ENDIF
    \STATE Train SMILE with 2 hidden layers $\rightarrow$ apply Algorithm~\ref{alg:stage2}
    \RETURN $\hat{f}$
\ELSE
    \STATE Identify variable $x_k$ with lowest $R^2_k$
    \STATE Select window $[x^* - \delta, x^* + \delta]$ satisfying $\left|\frac{\partial y}{\partial x_k}\right| \cdot 2\delta \, / \, \mathrm{std}(y) < \tau_{\text{window}}$
    \STATE Train SMILE with 1 hidden layer on windowed data to recover $f(\mathbf{x}_{\setminus k})$
    \STATE Compute residuals at two fixation points
    \IF{multiplicative residuals are approximately constant}
        \STATE Set $r \leftarrow y / f(\mathbf{x}_{\setminus k})$
    \ELSE
        \STATE Set $r \leftarrow y - f(\mathbf{x}_{\setminus k})$
    \ENDIF
    \STATE Train SMILE with 1 hidden layer on residuals to recover $g(x_k)$
    \STATE Combine: $\hat{f} \leftarrow f \cdot g$ or $\hat{f} \leftarrow f + g$
    \STATE Apply Algorithm~\ref{alg:stage2} to $\hat{f}$
    \RETURN $\hat{f}$
\ENDIF
\end{algorithmic}
\end{algorithm}

\begin{algorithm}[hbt]
\caption{SMILE Stage 2: Symbolic Recovery}\label{alg:stage2}
\begin{algorithmic}[1]
\REQUIRE Trained SMILE network, tolerances $\tau_{\text{prune}}$, $\tau_{\text{round}}$
\ENSURE Closed-form symbolic expression $\hat{f}$
\STATE \textit{Greedy pruning:}
\REPEAT
    \STATE Identify gate $g$ with smallest impact on training $R^2$
    \STATE Force $\sigma(g) \approx 0$
\UNTIL{no gate can be closed without $R^2$ drop $> \tau_{\text{prune}}$}
\STATE Extract closed-form expression from remaining active subgraph
\STATE \textit{Parametric optimization:}
\STATE Refit all numerical constants via least-squares optimization
\STATE \textit{Gradient-based rounding:}
\FOR{each coefficient $c_i$}
    \IF{$|c_i - r_i| \cdot \left|\frac{\partial \hat{f}}{\partial c_i}\right| < \tau_{\text{round}}$ for all $\mathbf{x} \in S$}
        \STATE Snap $c_i \leftarrow r_i$
    \ENDIF
\ENDFOR
\RETURN $\hat{f}$
\end{algorithmic}
\end{algorithm}

\subsection{Variable analysis: Window selection algorithm}\label{app:varfix}

The window selection criterion ensures that fixing $x_k$ within a window $[x^* - \delta, x^* + \delta]$ introduces negligible variation in the target $y$. The starting requirement is that the relative change in $y$ due to $x_k$ remains small:
\begin{equation}\label{eq:window-exact}
\frac{|y(\mathbf{x}_{\setminus k}, x^* + \delta) - y(\mathbf{x}_{\setminus k}, x^* - \delta)|}{\mathrm{std}(y)} < \tau.
\end{equation}
By a first-order Taylor expansion around $x^*$:
\begin{equation}
y(\mathbf{x}_{\setminus k}, x^* + \delta) \approx y(\mathbf{x}_{\setminus k}, x^*) + \frac{\partial y}{\partial x_k} \cdot \delta, \qquad y(\mathbf{x}_{\setminus k}, x^* - \delta) \approx y(\mathbf{x}_{\setminus k}, x^*) - \frac{\partial y}{\partial x_k} \cdot \delta.
\end{equation}
Substituting into Equation~\ref{eq:window-exact} and simplifying:
\begin{equation}\label{eq:window-approx}
\frac{\left|\frac{\partial y}{\partial x_k}\right| \cdot 2\delta}{\mathrm{std}(y)} < \tau.
\end{equation}
In practice, we estimate the partial derivative empirically and define a constancy score that quantifies whether fixing $x_k$ is a defensible approximation.

\paragraph{Window sweep}
We generate $n_{\text{windows}} = 6$ candidate relative half-widths, log-spaced from a maximum percentage $p_{\max}$ (default 20\%) down to a minimum $p_{\min}$ (default 2\%):
\begin{equation}
p \in \texttt{logspace}\bigl(\log_{10} p_{\max},\; \log_{10} p_{\min},\; n_{\text{windows}}\bigr).
\end{equation}
Each $p$ defines a candidate window $[x^* - p \cdot x^*,\; x^* + p \cdot x^*]$ with width $W = 2p \cdot x^*$.

\paragraph{Constancy score}
For each candidate window, starting from the largest, we constrain the data to a slab where $x_k$ falls within the window and all other variables $\mathbf{x}_{\setminus k}$ are restricted to a narrow band so that only the effect of $x_k$ is measured. Within this range, we estimate the partial derivative by fitting a linear regression $y \approx \beta \, x_k + \text{const}$ and compute the constancy score:
\begin{equation}
C = \frac{|\beta| \cdot W}{\mathrm{std}(y_{\text{slab}})}.
\end{equation}
This directly operationalizes Equation~\ref{eq:window-approx}: $|\beta|$ approximates $\left|\frac{\partial y}{\partial x_k}\right|$, so the numerator estimates the total drift in $y$ as $x_k$ traverses the window, while the denominator captures the natural variability of $y$ within the slab. A window passes if $C < \tau_{window}$ (default $\tau_{window} = 0.05$), meaning the drift introduced by $x_k$ is at most 5\% of $y$'s standard deviation.

\paragraph{Selection and fallback}
Candidates are evaluated from largest to smallest. The first window that satisfies $C < \tau$ is selected, ensuring the widest defensible approximation and thus the most training data for the subsequent SMILE fit. If no window passes, the algorithm falls back to the smallest candidate $\pm p_{\min} \cdot x^*$, where the constant approximation is most defensible even if imperfect.

\subsection{Gradient-Based rounding}\label{app:rounding}

\begin{theorem}[Gradient-based Rounding]
\label{thm:grad-round}
Let $\hat{f} : \mathbb{R}^{p + d} \rightarrow \mathbb{R}$ be a function of $p$ parameters and an input $\mathbf{x} \in \mathbb{R}^d$, with continuous partial derivatives with respect to a parameter $P_i$. If
\[
\left\lvert h \cdot \frac{\partial \hat{f}(P_i; \mathbf{x})}{\partial P_i} \right\rvert < \epsilon \quad \forall P_i \in [\alpha, \alpha + h], \, \mathbf{x},
\]
for some small $\epsilon > 0$, then
\[
\hat{f}(P_i = \alpha + h; \mathbf{x}) \approx \hat{f}(P_i = \alpha; \mathbf{x}).
\]
\end{theorem}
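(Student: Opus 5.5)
The plan is to fix $\mathbf{x}$ and all parameters other than $P_i$. We then view $\hat{f}$ as a one-variable function $\varphi(t) = \hat{f}(P_i = t; \mathbf{x})$ on the interval $[\alpha, \alpha+h]$. The hypothesis that $\hat{f}$ has a continuous partial derivative with respect to $P_i$ makes $\varphi$ continuously differentiable on this closed interval, with $\varphi'(t) = \partial \hat{f}/\partial P_i$ evaluated at $P_i = t$. This reduces the statement to a one-dimensional estimate, uniform in $\mathbf{x}$.

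The key step is the mean value theorem, or equivalently the fundamental theorem of calculus. We write
\[
\varphi(\alpha+h) - \varphi(\alpha) = \int_{\alpha}^{\alpha+h} \varphi'(t)\,dt = h\,\varphi'(\xi)
\]
for some $\xi \in [\alpha, \alpha+h]$, where $\xi$ may depend on $\mathbf{x}$. By hypothesis $\lvert h\,\varphi'(\xi)\rvert < \epsilon$, because the bound holds at every point of the interval and in particular at $\xi$. We conclude that $\lvert \hat{f}(P_i=\alpha+h;\mathbf{x}) - \hat{f}(P_i=\alpha;\mathbf{x})\rvert < \epsilon$ for every $\mathbf{x}$. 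This is the precise meaning of ``$\approx$'': the rounded and unrounded expressions agree to within $\epsilon$ pointwise on the sample set.

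If $h<0$, meaning we round downward, the same argument applies on $[\alpha+h, \alpha]$ with $\lvert h\rvert$ in place of $h$. I will also state the integral form $\lvert \varphi(\alpha+h)-\varphi(\alpha)\rvert \le \lvert h\rvert \sup_{t} \lvert \varphi'(t)\rvert$, since it makes clear why the bound must hold on the whole interval and not only at one endpoint.

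The only real obstacle is interpretive. The statement writes ``$\approx$'' rather than a quantitative bound, so I will make it precise as ``within $\epsilon$''. I will also link it to the practical criterion in the pipeline, which is $\lvert c_i - r_i\rvert \cdot \lvert \partial \hat{f}/\partial c_i(r_i,\mathbf{x})\rvert < \tau$. That criterion checks the derivative only at the endpoint $r_i$. I would note that by continuity of $\partial\hat{f}/\partial P_i$, the endpoint check yields the interval condition up to a term of order $o(\lvert h\rvert)$ as $h\to 0$, so it is a first-order surrogate for the hypothesis of the theorem.
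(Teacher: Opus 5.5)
Your proposal is correct and takes the same route as the paper. You freeze $\mathbf{x}$, view $\hat{f}$ as a one-variable function of $P_i$, and apply the mean value theorem to get $\lvert \hat{f}(P_i=\alpha+h;\mathbf{x}) - \hat{f}(P_i=\alpha;\mathbf{x})\rvert < \epsilon$ for every $\mathbf{x}$; the paper does the same (via $\lvert h\rvert \max_{t}\lvert \partial \hat{f}/\partial P_i\rvert$) and also remarks that in practice the derivative is checked only at $r_i$, while you apply the bound directly at the mean-value point and also cover $h<0$, which is slightly tidier but not a different argument.
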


\begin{proof}
Let $g_{\mathbf{x}}(y) = \hat{f}(P_i = y; \mathbf{x})$. By the mean value theorem, there exists $\alpha_0 \in [\alpha, \alpha+h]$ such that
\[
g_{\mathbf{x}}(\alpha+h) - g_{\mathbf{x}}(\alpha) = h \cdot g_{\mathbf{x}}'(\alpha_0).
\]
Thus,
\[
|g_{\mathbf{x}}(\alpha+h) - g_{\mathbf{x}}(\alpha)| = |h \cdot g_{\mathbf{x}}'(\alpha_0)| \leq h \cdot \max_{y \in [\alpha, \alpha+h]} |g_{\mathbf{x}}'(y)|,
\]
where the maximum exists due to the continuity of $g_{\mathbf{x}}'$ on the closed interval. If this upper bound is smaller than a predefined threshold $\tau$, i.e.,
\[
h \cdot \max_{y \in [\alpha, \alpha+h]} |g_{\mathbf{x}}'(y)| < \tau,
\]

The change in the function value is negligible and we may approximate $g_{\mathbf{x}}(\alpha+h) \approx g_{\mathbf{x}}(\alpha)$. If the condition holds for all $\mathbf{x} \in S$, where $S$ denotes the set of all input samples, then $\hat{f}(\alpha + h) \approx \hat{f}(\alpha)$.
\end{proof}

In practice, we set $\alpha = r_i$ (the candidate rounded value) and $h = c_i - r_i$ (the rounding offset). The change in the expression due to rounding is bounded by:
\[
|\hat{f}(c_i, \mathbf{x}) - \hat{f}(r_i, \mathbf{x})| \leq |c_i - r_i| \cdot \max_{s \in [r_i, c_i]} \left| \frac{\partial \hat{f}}{\partial c_i}(s, \mathbf{x}) \right|.
\]
To avoid computing gradients at multiple points, we approximate the maximum by evaluating the derivative at the rounded value $r_i$:
\[
|\hat{f}(c_i, \mathbf{x}) - \hat{f}(r_i, \mathbf{x})| \approx |c_i - r_i| \cdot \left| \frac{\partial \hat{f}}{\partial c_i}(r_i, \mathbf{x}) \right|.
\]
If this estimated change is smaller than a predefined threshold $\tau$ for all input samples $\mathbf{x} \in S$, we replace $c_i$ with the rounded value $r_i$, recovering exact symbolic constants without degrading the quality of the expression.

\section{Datasets}\label{app:datasets}

We evaluate SMILE on the Symbolic Regression Benchmark (SRBench)~\cite{lacava}, built on the Penn Machine Learning Benchmark (PMLB)~\cite{pmlb}. SRBench comprises 252 regression problems organized into three groups: 119 Feynman equations, 14 Strogatz ODE problems, and 122 black-box regression problems. We restrict the black-box evaluation to 57 problems with continuous features and input dimension $d \leq 10$, following prior work~\cite{lacava}. All datasets are publicly available through PMLB.\footnote{\url{https://github.com/EpistasisLab/pmlb}}

\textbf{Feynman dataset} This collection consists of 119 equations from the Feynman Symbolic Regression Database~\cite{aifeynman},\footnote{\url{https://space.mit.edu/home/tegmark/aifeynman.html}} spanning classical mechanics, electromagnetism, thermodynamics, and quantum mechanics. Of these, 100 are drawn from the Feynman Lectures on Physics~\cite{aifeynman} and serve as the core benchmark, covering expressions with 1 to 9 input variables and involving elementary functions such as polynomials, trigonometric functions, exponentials, and square roots. The remaining 19 are bonus equations selected from other seminal physics textbooks for their greater complexity and difficulty, involving deeper compositions and more intricate variable interactions. Each problem is provided with synthetically generated data sampled uniformly over a specified input domain. Each Feynman problem provides $10^6$ data points; in all experiments, we subsample $10^4$ points uniformly at random for training.

\textbf{Strogatz dataset.} This collection consists of 14 ordinary differential equation problems from the ODE-Strogatz repository~\cite{strogatz},\footnote{\url{https://github.com/lacava/ode-strogatz}} based on nonlinear dynamics and chaos. These problems require recovering the right-hand side of governing ODEs, with expressions that are generally compact but involve nonlinear interactions between variables. The dataset tests a method's ability to recover precise functional forms in low-dimensional dynamical settings.

\textbf{Black-box dataset.} The full PMLB collection contains 122 regression problems without known ground-truth expressions, sourced from various open-source repositories~\cite{pmlb}. Since no true underlying equations are available, these problems are evaluated using median $R^2$ and expression complexity rather than symbolic solution rate, testing whether methods can produce accurate yet simple models on data that may not exhibit the compositional structure typical of physical laws.

\section{Baselines}\label{app:baselines}

We compare SMILE against 19 methods evaluated on SRBench. Fourteen of these are included in the original SRBench evaluation~\cite{lacava}. We additionally compare against four recent methods that have achieved state-of-the-art performance on different experiments and we described them below.

\textbf{PySR}~\cite{sr-gp2023} is a high-performance GP framework built on the Julia library SymbolicRegression.jl. It employs a multi-population evolutionary algorithm with an evolve--simplify--optimize loop, where candidate expression trees are evolved through mutation and crossover, algebraically simplified, and have their constants optimized via gradient-based methods. Multiple populations are run in parallel with periodic migration to improve exploration.

\textbf{uDSR}~\cite{udsr} is a unified framework that integrates five SR solution strategies: recursive problem simplification, neural-guided search, large-scale pre-training, genetic programming, and linear models. These components are combined as connected but non-overlapping modules, with each strategy contributing complementary capabilities. While achieving strong performance, the framework involves substantial computational cost due to the coordination of multiple strategies.

\textbf{E2E}~\cite{e2e} is a transformer-based approach that directly predicts the full mathematical expression, including numerical constants, from a set of input-output observations in a single forward pass. The model is pretrained on a large corpus of synthetically generated equation-data pairs and refines the predicted constants via BFGS optimization. While inference is nearly instantaneous, the method is sensitive to its training distribution and may struggle on expressions outside the patterns seen during pretraining.

\textbf{ParFam}~\cite{parfam} reformulates SR as continuous global optimization over parametric families of symbolic functions. It iterates through candidate function families parameterized by compositions of elementary operations, optimizing their coefficients via basin-hopping with L-BFGS local search. A sparsity-promoting finetuning step progressively zeroes out small coefficients to recover compact expressions. The activation functions and polynomial degrees of the parametric families are user-specified.

\textbf{Note on EQL$^{\div}$} We do not include EQL$^{\div}$~\cite{eql2018} in our comparisons. Although SMILE shares the principle of embedding symbolic activations into a neural architecture, EQL$^{\div}$ lacks support for logarithm, exponential, and square root operations, preventing it from expressing a significant portion of the Feynman and Strogatz equations. Prior work~\cite{parfam} evaluated EQL$^{\div}$ on a reduced benchmark of 96 equations excluding these operations and reported a symbolic solution rate of 16.7\%, substantially below all competitive baselines. Given this limited expressivity, a full-benchmark comparison would be unfair, and we therefore follow the same exclusion rationale as~\cite{parfam}.

\section{Additional results}\label{app:results}

\subsection{Hyperparameters and Configuration}\label{app:hyperparams}

\autoref{tab:hyperparams} summarizes all hyperparameters and configuration settings used across all experiments. All values are fixed across all datasets with no per-problem tuning.

\begin{table}[H]
\centering
\caption{SMILE hyperparameters and pipeline configuration.}\label{tab:hyperparams}
\begin{tabular}{@{}lll@{}}
\toprule
\textbf{Category} & \textbf{Parameter} & \textbf{Value} \\
\midrule
\multirow{5}{*}{Architecture}
& Neurons per hidden layer & 5 (Sin, Mul, Identity, Log, Exp) \\
& Hidden layers & 1 or 2 \\
& Residual connections & Fully dense, all layers \\
& Bias & Learnable constant 1 appended to input \\
& Gate initialization & 0 ($\sigma = 0.5$) \\
\midrule
\multirow{6}{*}{Training}
& Optimizer & Adam \\
& Learning rate & 0.1 \\
& Epochs & 1000 \\
& Batch size & 512\\
& Early stopping patience & 300 \\
& Trials & 3\\
\midrule
\multirow{2}{*}{Clamping}
& $x_l$ (log threshold) & 0.005\\
& $x_e$ (exp threshold) & 4\\
\midrule
\multirow{3}{*}{Pipeline}
& $\tau_{\text{window}}$ (window tolerance) & 0.05 \\
& $\tau_{\text{prune}}$ (pruning tolerance) & 0.01\\
& $\tau_{\text{round}}$ (rounding tolerance) & 0.001\\
\bottomrule
\end{tabular}
\end{table}

\begin{table}[H]
\centering
\caption{Computational resources.}\label{tab:compute}
\begin{tabular}{@{}ll@{}}
\toprule
\textbf{Parameter} & \textbf{Value} \\
\midrule
CPU & Intel Xeon Gold 5420+ \\
GPU & NVIDIA RTX A5000 (24 GB) \\
Framework & PyTorch \\
Avg. time per problem & 43.9 s\\
\bottomrule
\end{tabular}
\end{table}

\newpage

\subsection{Ground-truth datasets}\label{app:groundtruth}

\autoref{fig:r2-fey} presents a detailed comparison of all baselines on the Feynman dataset in terms of median $R^2$ test at increasing precision levels, expression complexity, and training time. We define expression complexity as the number of nodes in the symbolic expression tree, where each operator (e.g., $\sin$, $\log$, $+$, $\times$), variable, and constant counts as a single node. For example, the expression $\sin(x_1) + x_2$ has a complexity of 4 ($\sin$, $x_1$, $+$, $x_2$). This definition is consistent with the complexity measure used in SRBench, enabling direct comparison across methods. SMILE and ParFam achieve comparable median $R^2$ at the highest precision levels, both producing accurate predictions across the dataset. In terms of expression complexity, SMILE, AI Feynman, and PySR produce the simplest expressions among all baselines, with SMILE consistently at the lowest complexity. Training time follows the same trend observed in the main text, with SMILE completing in the order of minutes while most competitive baselines require several hours. 

\begin{figure}[htb]
\centering
\includegraphics[width=\linewidth]{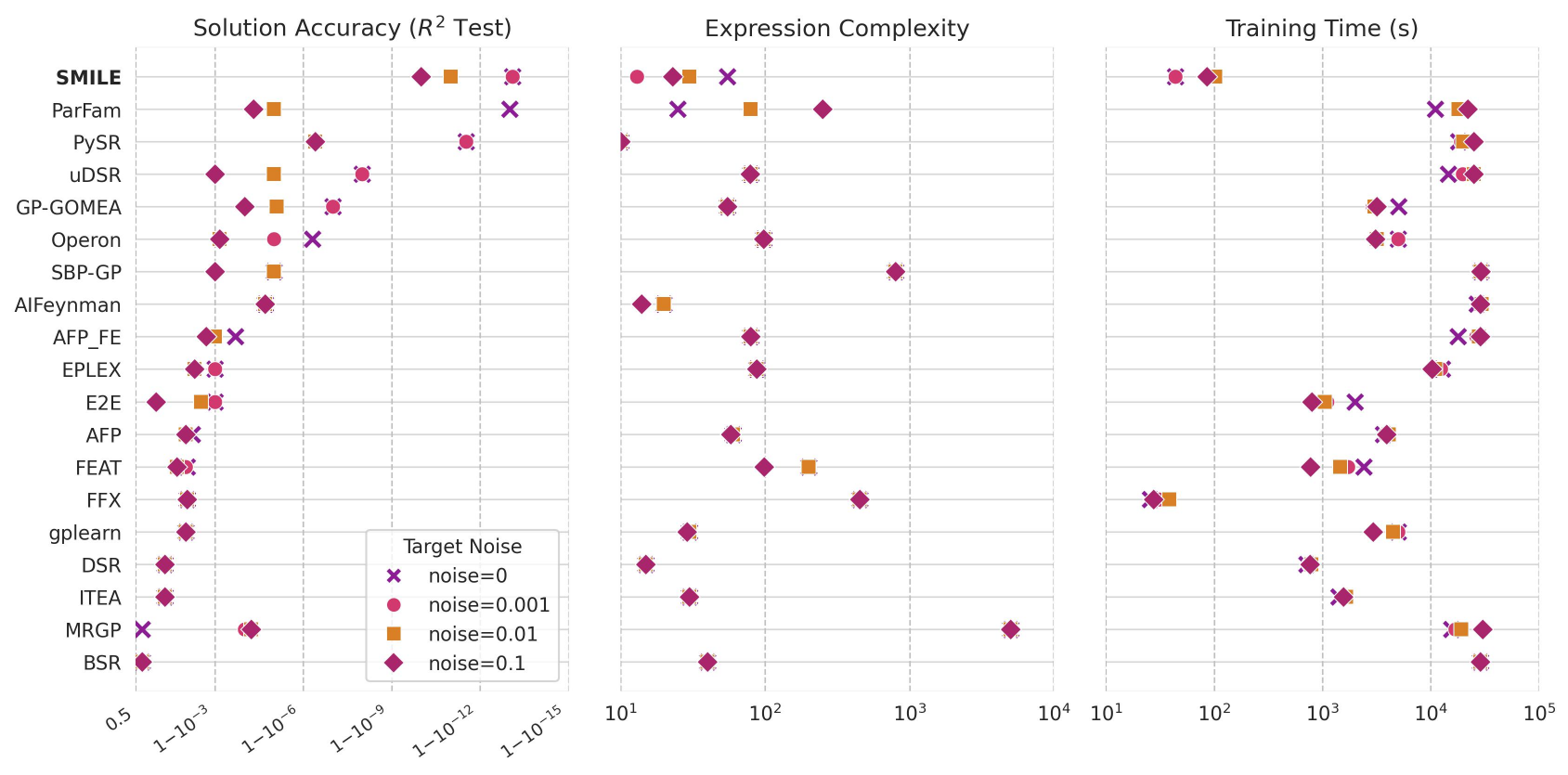}
\caption{Comparison of all methods on the Feynman dataset in terms of median $R^2$ test at multiple precision levels, expression complexity, and training time.}
\label{fig:r2-fey}
\end{figure}

\autoref{fig:asr-fey-stro} compares the accuracy solution rate ($R^2 > 0.999$) on the Feynman and Strogatz datasets across all baselines. As discussed in \autoref{sec:results}, SMILE's accuracy solution rate on the Feynman dataset is lower than the top baselines due to the shallow architecture favoring exact symbolic recovery. On the Strogatz dataset, SMILE achieves a higher accuracy solution rate than on Feynman, and demonstrates strong robustness to noise, decreasing by at most 20\% from the noise-free setting to the highest noise level, compared to approximately 70\% for ParFam and PySR. \textit{At the highest noise level, SMILE achieves the best accuracy solution rate among all baselines on the Strogatz dataset.}

\begin{figure}[htb]
\centering
\includegraphics[width=0.9\linewidth]{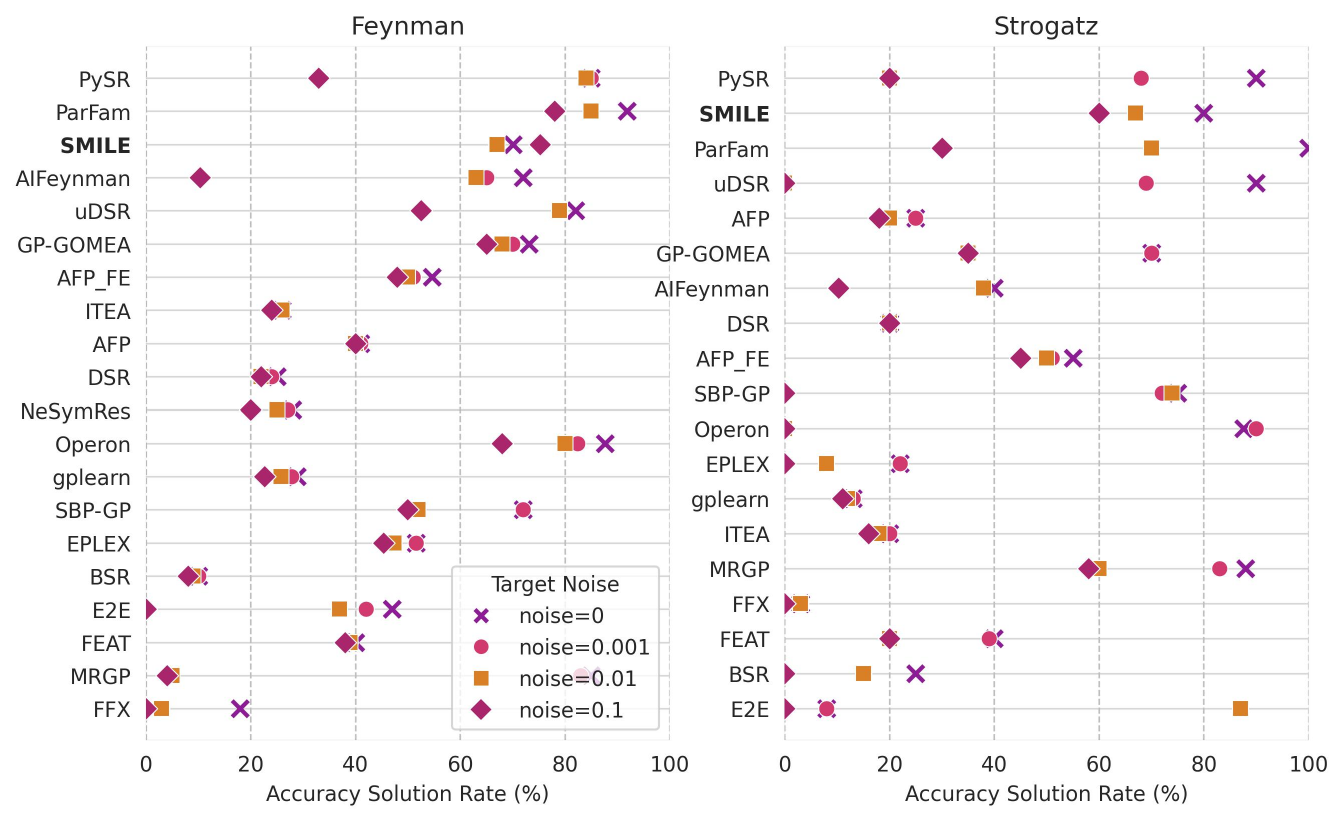}
\caption{Accuracy solution rate ($R^2 > 0.999$) on the Feynman and Strogatz datasets across all evaluated methods.}
\label{fig:asr-fey-stro}
\end{figure}

\autoref{tab:examples} presents representative examples of expressions recovered by SMILE on the Feynman dataset. The table includes formulas of varying complexity and number of input variables, illustrating SMILE's ability to recover exact symbolic forms across different noise levels. In several cases, SMILE recovers expressions that are structurally different from the ground truth but mathematically equivalent, such as replacing $\cos(\theta)$ with $\sin(\pi/2 - \theta)$ (II.15.4) or $\cos(\theta)\sin(\theta)$ with $\sin(2\theta)/2$ (II.6.15b), demonstrating that the recovery pipeline is not constrained to a single canonical form.

\begin{table}[H]
\centering
\caption{Examples of expressions recovered by SMILE on the Feynman dataset under noise-free and noisy conditions.}\label{tab:examples}
\renewcommand{\arraystretch}{2}
\resizebox{\textwidth}{!}{
\begin{tabular}{@{}llccccc@{}}
\toprule
\textbf{Problem} & \textbf{Ground truth} & \textbf{SMILE output} & $\boldsymbol{\sigma=0}$ & $\boldsymbol{\sigma=0.001}$ & $\boldsymbol{\sigma=0.01}$ & $\boldsymbol{\sigma=0.1}$ \\
\midrule
I.6.2a & $\dfrac{e^{-\theta^2/2}}{\sqrt{2\pi}}$ & $\dfrac{e^{-\theta^2/2}}{\sqrt{2\pi}}$ & \checkmark & $\times$  & $\times$  & $\times$  \\[6pt]
I.12.2 & $\dfrac{q_1 q_2 r}{4\pi \epsilon r^3}$ & $\dfrac{q_1 q_2}{4\pi \epsilon r^2}$ & \checkmark & \checkmark & \checkmark & \checkmark \\[6pt]
I.18.14 & $m r v \sin(\theta)$ & $m r v \sin(\theta)$ & \checkmark & \checkmark & \checkmark & $\times$ \\[6pt]
I.40.1 & $n_0 e^{-mgx/(k_b T)}$ & $n_0 e^{-mgx/(k_b T)}$ & \checkmark & $\times$ & $\times$ & $\times$ \\[6pt]
I.43.43 & $\dfrac{k_b v}{(\gamma-1)A}$ & $\dfrac{k_b v}{(\gamma-1)A}$ & \checkmark & \checkmark & $\times$ & $\times$ \\[6pt]
 II.15.4 & $-\mu B \cos(\theta)$ & $-\mu B \sin(\pi/2 - \theta)$ & \checkmark & \checkmark & \checkmark & \checkmark \\[6pt]
III.8.54 & $\sin^2\!\left(\dfrac{2\pi E_n t}{h}\right)$ & $\sin^2\!\left(\dfrac{2\pi E_n t}{h}\right)$ & \checkmark & \checkmark & \checkmark & $\times$ \\[6pt]
 III.4.32 & $\dfrac{1}{e^{h\omega/(2\pi k_b T)}-1}$ & $\dfrac{1}{e^{h\omega/(2\pi k_b T)}-1}$ & \checkmark & $\times$ & $\times$ & $\times$ \\[6pt]
II.6.15b & $\dfrac{3p_d \cos(\theta)\sin(\theta)}{4\pi\epsilon r^3}$ & $\dfrac{3p_d \sin(2\theta)}{8\pi\epsilon r^3}$ & \checkmark & \checkmark & \checkmark & \checkmark \\[6pt]
 I.47.23 & $\sqrt{\dfrac{\gamma \, p_r}{\rho}}$ & $\sqrt{\dfrac{\gamma \, p_r}{\rho}}$ & \checkmark & \checkmark & \checkmark & \checkmark \\[6pt]
II.6.11 & $\dfrac{p_d \cos(\theta)}{4\pi\epsilon r^2}$ & $\dfrac{-p_d \sin(\theta - \pi/2)}{4\pi\epsilon r^2}$ & \checkmark & \checkmark & $\times$ & $\times$ \\[6pt]
\bottomrule
\end{tabular}
}
\end{table}

To evaluate robustness to noise, we plot the SSR as a function of noise level for the top five methods on both the Feynman and Strogatz benchmarks (Figures~\ref{fig:no-robust}). On the Feynman benchmark, SMILE maintains competitive SSR across all noise levels. Notably, while other top-performing methods show sudden drops in SSR as noise increases, SMILE remains stable, demonstrating strong robustness to noisy observations. On the Strogatz benchmark, a similar trend is observed, with SMILE exhibiting a more gradual decline in SSR compared to the other top-performing methods (other methods achieve near 0\% SSR in high noise levels). These results suggest that the combination of continuous optimization and discrete symbolic recovery in SMILE provides a natural form of noise regularization, as the pruning and rounding stages discard small or unstable components that are more likely to arise from fitting noise rather than true structure.

\begin{figure}[htb]
\centering
\includegraphics[width=\linewidth]{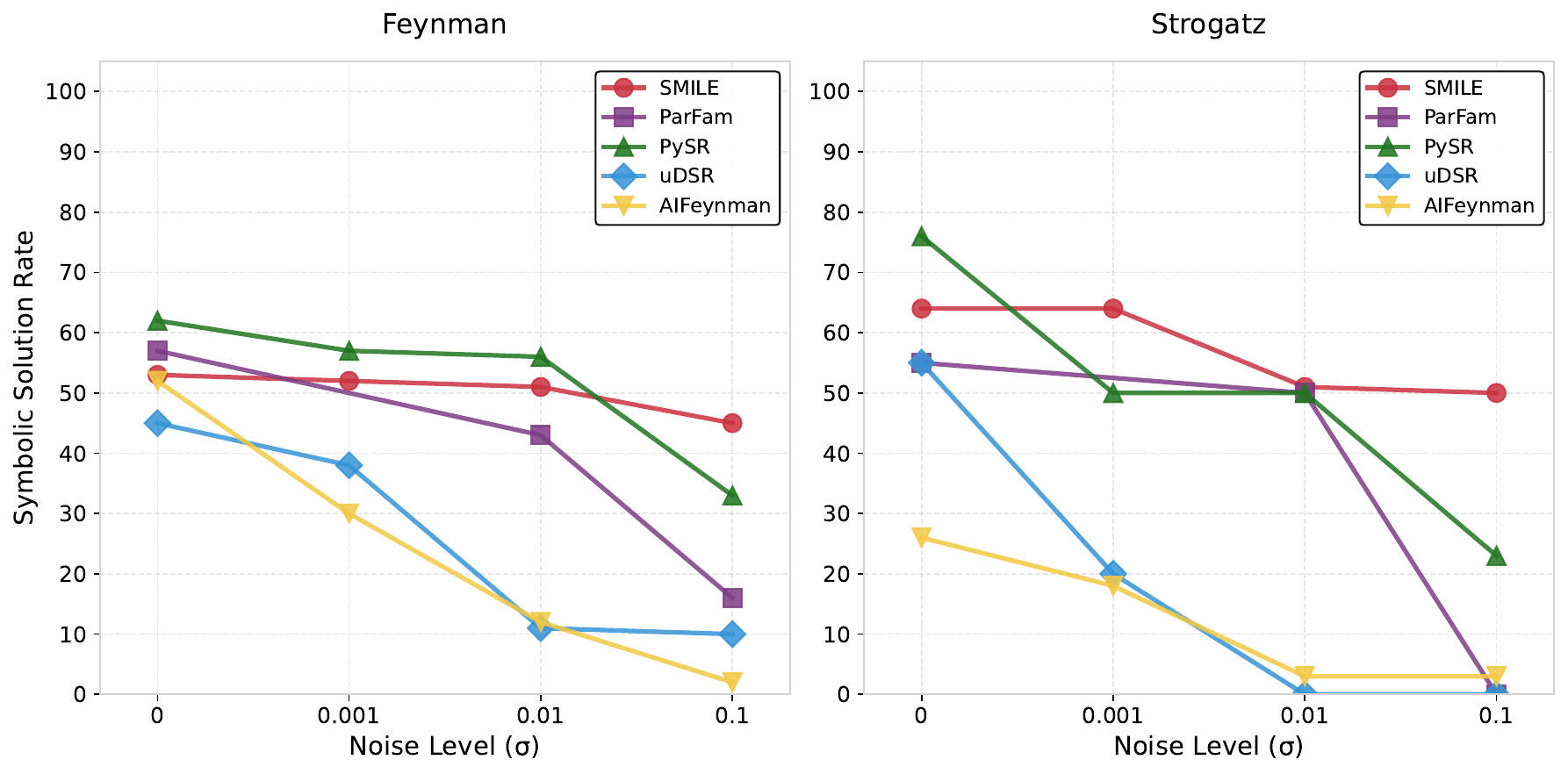}
\caption{SSR as a function of noise level for the top five methods on the Feynman and Strogatz benchmarks.}
\label{fig:no-robust}
\end{figure}

\paragraph{Qualitative visualization}
To provide an intuitive view of the symbolic expressions recovered by SMILE, we visualize six representative problems with two input variables in Figure~\ref{fig:heatmaps}. For each problem, we plot the ground-truth function (left) and SMILE's recovered expression (right) as heatmaps over the input domain, with training data points overlaid. The first three formulas show problems from the Jin et al.~\cite{jin2019} benchmark, while the second three formulas show problems from the Feynman benchmark. For problems with three input variables, one variable is held fixed to enable 2D visualization. In five of the six cases, the recovered expression matches the ground truth exactly ($R^2 = 1$). In the remaining case — the Gaussian probability density function — SMILE recovers a close approximation ($R^2 > 0.99$) that visually aligns with the ground truth but does not match the exact symbolic form.

\begin{figure}[htb]
\centering
\includegraphics[width=\linewidth]{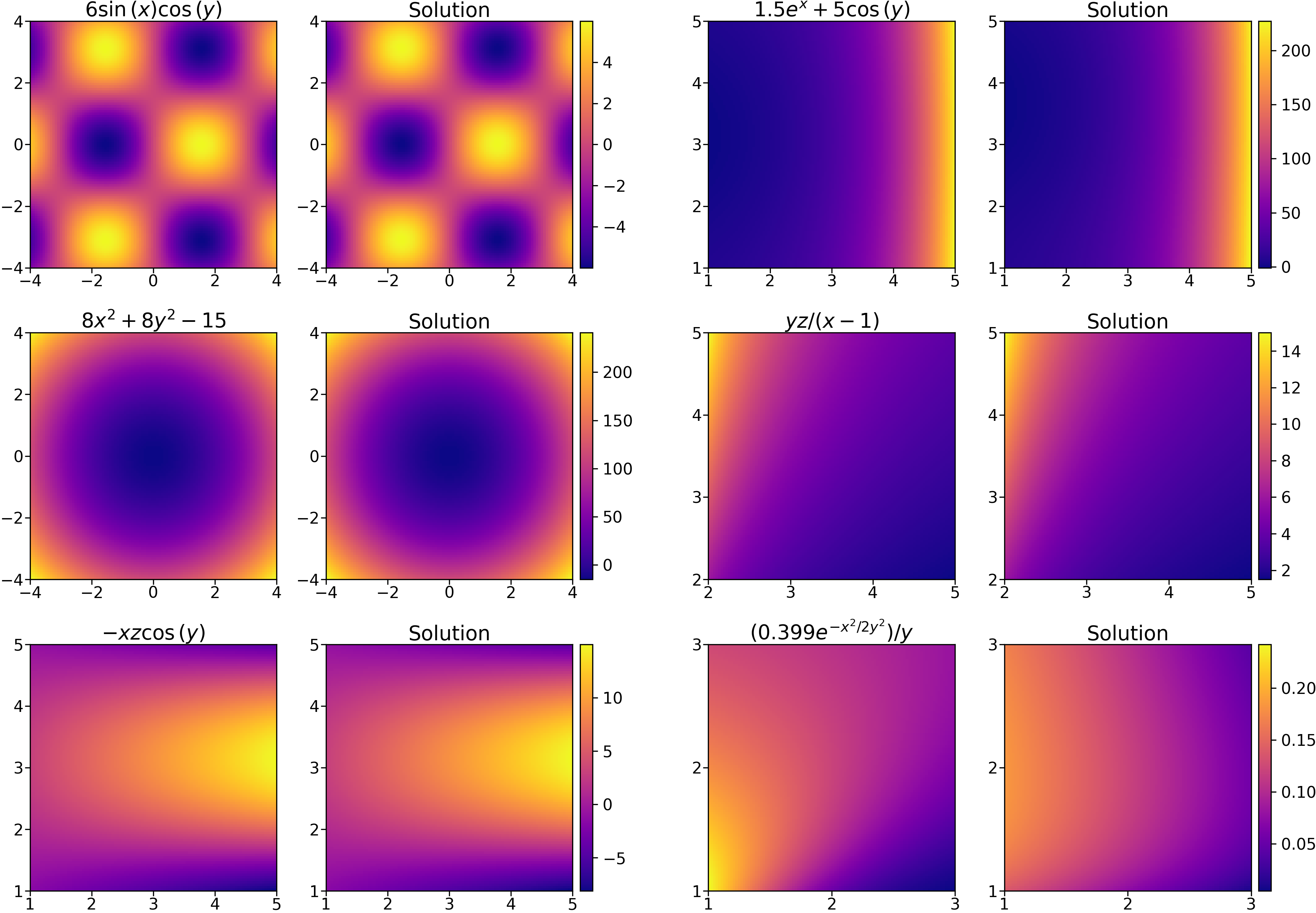}
\caption{Ground-truth (left) and SMILE-recovered (right) expressions for six problems: three from the Jin et al.~\cite{jin2019} benchmark and three from the Feynman benchmark. For three-variable problems, one variable is held fixed. Five expressions are recovered exactly ($R^2 = 1$); the Gaussian PDF (bottom right) is a close approximation ($R^2 > 0.99$).}
\label{fig:heatmaps}
\end{figure}

\subsection{Black-box dataset}\label{app:blackbox}

\autoref{fig:r2-black} compares all methods on the 57 black-box problems in terms of median $R^2$, expression complexity, and training time. SMILE achieves the lowest training time among all baselines and produces expressions of very low complexity, second only to DSR. However, SMILE's median $R^2$ is lower than several baselines on the black-box dataset. This is consistent with the design of the pipeline: the data analysis and decomposition stage relies on detecting power-law and compositional structure in the data, assumptions that are well-suited to scientific laws but may not hold for arbitrary real-world relationships where the underlying function lacks such regularity. Despite the lower median $R^2$, the left panel of \autoref{fig:pareto-com} shows that SMILE lies on the Pareto front alongside DSR and PySR, as the expressions it recovers are substantially simpler than those of higher-$R^2$ methods.

\begin{figure}[htb]
\centering
\includegraphics[width=\linewidth]{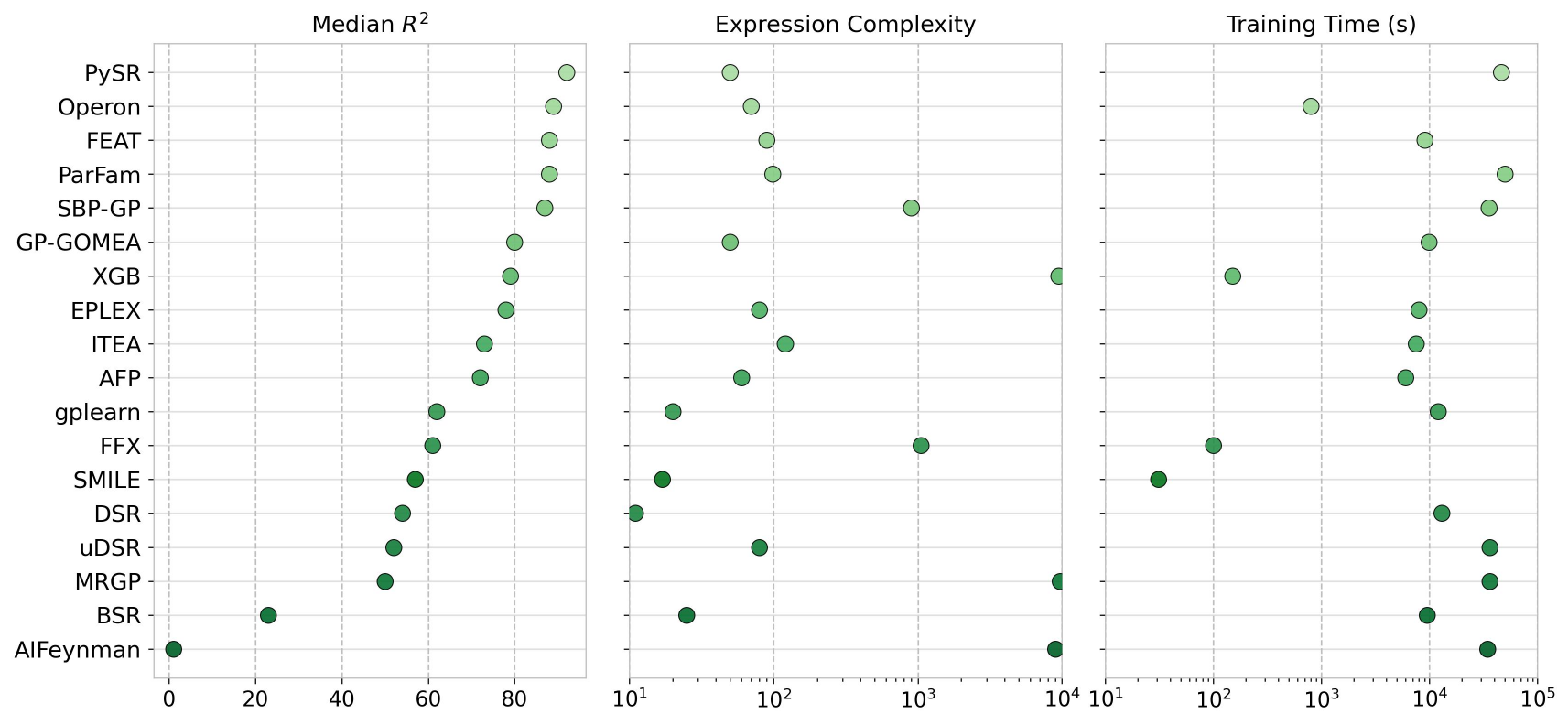}
\caption{Comparison of all methods on the black-box dataset in terms of median $R^2$, expression complexity, and training time.}
\label{fig:r2-black}
\end{figure}

\subsection{Ablation studies}\label{app:ablation}
To evaluate the contribution of each pipeline component, we systematically remove one component at a time while keeping the rest unchanged and re-run the pipeline on the Feynman dataset. The four components evaluated are: variable analysis and decomposition, greedy pruning, parametric optimization, and gradient-based rounding. Since removing a component reduces the simplification capacity of the pipeline, subsequent stages may fail to converge within a reasonable time. We therefore impose a maximum time budget per stage, and if a formula exceeds this limit, it is marked as a timeout error. This situation does not occur in the full pipeline but arises in ablated configurations. \autoref{fig:ablation} reports the SSR, accuracy solution rate, and timeout error rate for each ablated configuration compared to the full pipeline.

The four ablated configurations fall into two groups based on severity. Removing variable analysis or gradient-based rounding reduces SSR by approximately 15\% and the accuracy solution rate by roughly 30\%, yet produces no timeout errors, indicating that the pipeline still terminates within the allotted time budget. Training time roughly doubles in both cases, as downstream stages must compensate for the missing simplification step. Removing variable analysis eliminates the compositional decomposition of the target function, so the network attempts to fit the full expression in a single pass. The resulting formulas tend to be longer and less structured, achieving reasonable regression accuracy but failing to recover the ground-truth symbolic form. Removing gradient-based rounding leaves learned exponents and coefficients as floating-point values, preventing the final expression from reaching the exact symbolic constants required for a successful symbolic recovery.

Removing greedy pruning or parametric optimization has a substantially larger effect. SSR drops by approximately 35\% in both cases, and a significant fraction of formulas incur timeout errors. Without greedy pruning, the full set of gated components passes to parametric optimization and gradient-based rounding, producing expressions with a large number of active coefficients. The combinatorial burden of optimizing and rounding all of these coefficients simultaneously causes downstream stages to exceed the time budget. Without parametric optimization, the coefficient values passed to gradient-based rounding remain far from any sparse symbolic target, making the rounding step unable to collapse terms effectively and again triggering timeouts.

\begin{figure}[htb]
\centering
\includegraphics[width=0.9\linewidth]{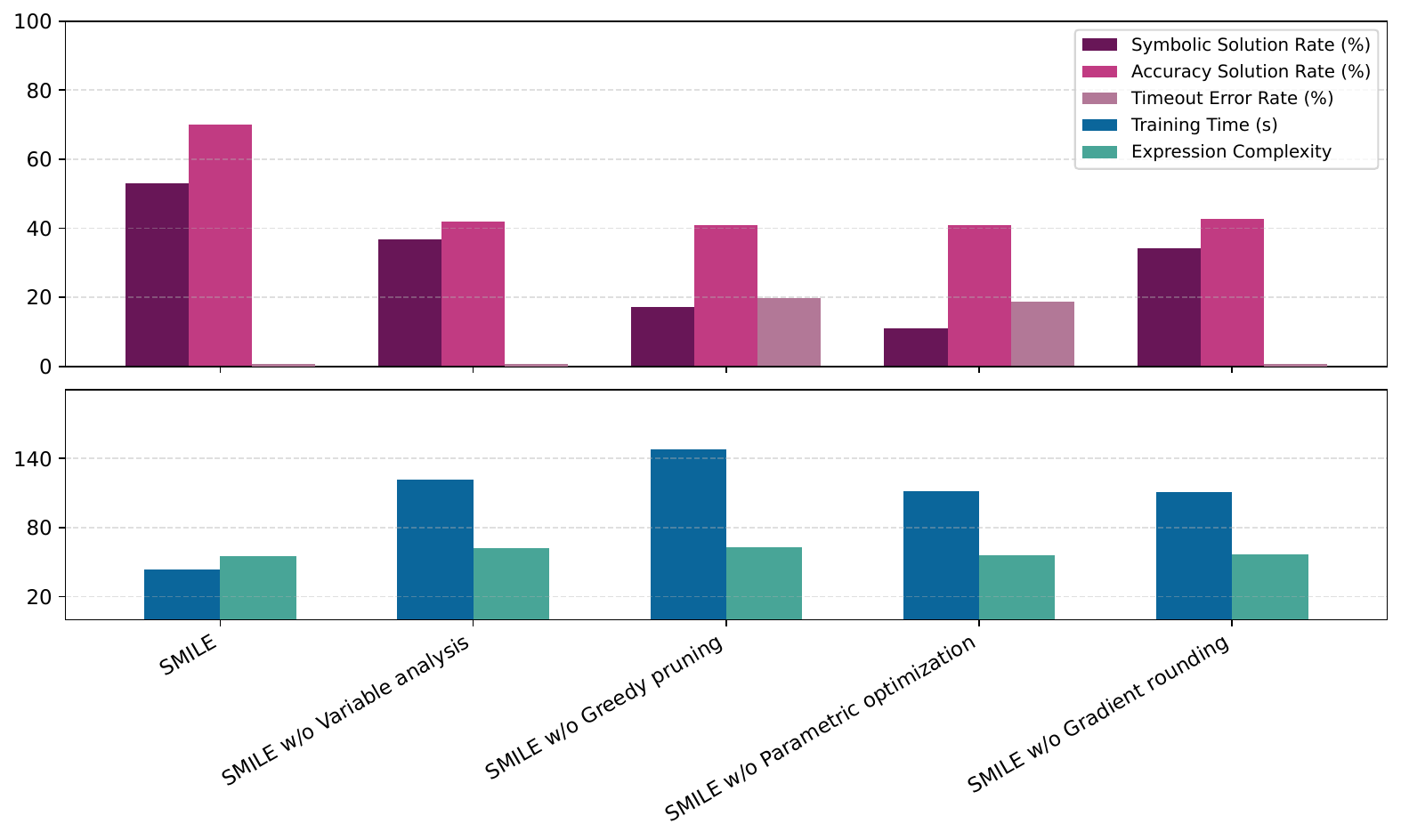}
\caption{Ablation study on the Feynman dataset. Each configuration removes one pipeline component while keeping the rest unchanged. SSR, accuracy solution rate, and timeout error rate are reported.}
\label{fig:ablation}
\end{figure}

Expression complexity increases across all four configurations. When variable analysis is removed, the loss of compositional structure yields long formulas that fit the data but do not reflect the underlying equation. When greedy pruning is removed, redundant components persist through the remainder of the pipeline, inflating the final expression. When parametric optimization or gradient-based rounding is removed, the pipeline loses a simplification stage that would otherwise drive coefficients toward exact values, leaving additional unresolved terms in the output.

Taken together, these results confirm that each component contributes to both the accuracy and the tractability of the discovery pipeline. Greedy pruning and parametric optimization are the most critical for maintaining a manageable search space, while variable analysis and gradient-based rounding primarily improve the quality and interpretability of the recovered expressions.


\newpage
\section*{NeurIPS Paper Checklist}

\begin{enumerate}

\item {\bf Claims}
    \item[] Question: Do the main claims made in the abstract and introduction accurately reflect the paper's contributions and scope?
    \item[] Answer: \answerYes{} 
    \item[] Justification: The abstract and introduction clearly state the main contributions of the paper, including the SMILE framework, the data-driven decomposition pipeline, and the evaluation on SRBench. These claims are consistent with the technical content of the paper: the method is described in detail in Section 3, and the experimental results in Section 4 and the appendix provide direct support. The paper does not make claims beyond what is demonstrated in the experiments.
    \item[] Guidelines:
    \begin{itemize}
        \item The answer \answerNA{} means that the abstract and introduction do not include the claims made in the paper.
        \item The abstract and/or introduction should clearly state the claims made, including the contributions made in the paper and important assumptions and limitations. A \answerNo{} or \answerNA{} answer to this question will not be perceived well by the reviewers. 
        \item The claims made should match theoretical and experimental results, and reflect how much the results can be expected to generalize to other settings. 
        \item It is fine to include aspirational goals as motivation as long as it is clear that these goals are not attained by the paper. 
    \end{itemize}

\item {\bf Limitations}
    \item[] Question: Does the paper discuss the limitations of the work performed by the authors?
    \item[] Answer: \answerYes{} 
    \item[] Justification: Limitations are discussed in Section 5, including the lower accuracy solution rate due to the shallow architecture and the pipeline's reliance on compositional structure assumptions, which may not hold for arbitrary black-box datasets.
    \item[] Guidelines:
    \begin{itemize}
        \item The answer \answerNA{} means that the paper has no limitation while the answer \answerNo{} means that the paper has limitations, but those are not discussed in the paper. 
        \item The authors are encouraged to create a separate ``Limitations'' section in their paper.
        \item The paper should point out any strong assumptions and how robust the results are to violations of these assumptions (e.g., independence assumptions, noiseless settings, model well-specification, asymptotic approximations only holding locally). The authors should reflect on how these assumptions might be violated in practice and what the implications would be.
        \item The authors should reflect on the scope of the claims made, e.g., if the approach was only tested on a few datasets or with a few runs. In general, empirical results often depend on implicit assumptions, which should be articulated.
        \item The authors should reflect on the factors that influence the performance of the approach. For example, a facial recognition algorithm may perform poorly when image resolution is low or images are taken in low lighting. Or a speech-to-text system might not be used reliably to provide closed captions for online lectures because it fails to handle technical jargon.
        \item The authors should discuss the computational efficiency of the proposed algorithms and how they scale with dataset size.
        \item If applicable, the authors should discuss possible limitations of their approach to address problems of privacy and fairness.
        \item While the authors might fear that complete honesty about limitations might be used by reviewers as grounds for rejection, a worse outcome might be that reviewers discover limitations that aren't acknowledged in the paper. The authors should use their best judgment and recognize that individual actions in favor of transparency play an important role in developing norms that preserve the integrity of the community. Reviewers will be specifically instructed to not penalize honesty concerning limitations.
    \end{itemize}

\item {\bf Theory assumptions and proofs}
    \item[] Question: For each theoretical result, does the paper provide the full set of assumptions and a complete (and correct) proof?
    \item[] Answer: \answerYes{} 
    \item[] Justification: The universal approximation theorem is stated in Section 3 with a proof sketch, and the full proof is provided in Appendix A. The gradient-based rounding theorem is stated in Section 3 with the complete proof in Appendix B.
    \item[] Guidelines:
    \begin{itemize}
        \item The answer \answerNA{} means that the paper does not include theoretical results. 
        \item All the theorems, formulas, and proofs in the paper should be numbered and cross-referenced.
        \item All assumptions should be clearly stated or referenced in the statement of any theorems.
        \item The proofs can either appear in the main paper or the supplemental material, but if they appear in the supplemental material, the authors are encouraged to provide a short proof sketch to provide intuition. 
        \item Inversely, any informal proof provided in the core of the paper should be complemented by formal proofs provided in appendix or supplemental material.
        \item Theorems and Lemmas that the proof relies upon should be properly referenced. 
    \end{itemize}

    \item {\bf Experimental result reproducibility}
    \item[] Question: Does the paper fully disclose all the information needed to reproduce the main experimental results of the paper to the extent that it affects the main claims and/or conclusions of the paper (regardless of whether the code and data are provided or not)?
    \item[] Answer: \answerYes{} 
    \item[] Justification: The paper describes the full methodology including the network architecture, training procedure, loss function, and discovery pipeline in Section 3. The complete algorithm is provided in Appendix B. All datasets are publicly available through SRBench (\url{https://github.com/cavalab/srbench}) and PMLB (\url{https://github.com/EpistasisLab/pmlb}), and hyperparameter details are provided in Appendix E.
    \item[] Guidelines:
    \begin{itemize}
        \item The answer \answerNA{} means that the paper does not include experiments.
        \item If the paper includes experiments, a \answerNo{} answer to this question will not be perceived well by the reviewers: Making the paper reproducible is important, regardless of whether the code and data are provided or not.
        \item If the contribution is a dataset and\slash or model, the authors should describe the steps taken to make their results reproducible or verifiable. 
        \item Depending on the contribution, reproducibility can be accomplished in various ways. For example, if the contribution is a novel architecture, describing the architecture fully might suffice, or if the contribution is a specific model and empirical evaluation, it may be necessary to either make it possible for others to replicate the model with the same dataset, or provide access to the model. In general. releasing code and data is often one good way to accomplish this, but reproducibility can also be provided via detailed instructions for how to replicate the results, access to a hosted model (e.g., in the case of a large language model), releasing of a model checkpoint, or other means that are appropriate to the research performed.
        \item While NeurIPS does not require releasing code, the conference does require all submissions to provide some reasonable avenue for reproducibility, which may depend on the nature of the contribution. For example
        \begin{enumerate}
            \item If the contribution is primarily a new algorithm, the paper should make it clear how to reproduce that algorithm.
            \item If the contribution is primarily a new model architecture, the paper should describe the architecture clearly and fully.
            \item If the contribution is a new model (e.g., a large language model), then there should either be a way to access this model for reproducing the results or a way to reproduce the model (e.g., with an open-source dataset or instructions for how to construct the dataset).
            \item We recognize that reproducibility may be tricky in some cases, in which case authors are welcome to describe the particular way they provide for reproducibility. In the case of closed-source models, it may be that access to the model is limited in some way (e.g., to registered users), but it should be possible for other researchers to have some path to reproducing or verifying the results.
        \end{enumerate}
    \end{itemize}

\item {\bf Open access to data and code}
    \item[] Question: Does the paper provide open access to the data and code, with sufficient instructions to faithfully reproduce the main experimental results, as described in supplemental material?
    \item[] Answer: \answerYes{} 
    \item[] Justification: The code will be made publicly available upon acceptance. All datasets used are publicly available through SRBench (\url{https://github.com/cavalab/srbench}) and PMLB (\url{https://github.com/EpistasisLab/pmlb}).
    \item[] Guidelines:
    \begin{itemize}
        \item The answer \answerNA{} means that paper does not include experiments requiring code.
        \item Please see the NeurIPS code and data submission guidelines (\url{https://neurips.cc/public/guides/CodeSubmissionPolicy}) for more details.
        \item While we encourage the release of code and data, we understand that this might not be possible, so \answerNo{} is an acceptable answer. Papers cannot be rejected simply for not including code, unless this is central to the contribution (e.g., for a new open-source benchmark).
        \item The instructions should contain the exact command and environment needed to run to reproduce the results. See the NeurIPS code and data submission guidelines (\url{https://neurips.cc/public/guides/CodeSubmissionPolicy}) for more details.
        \item The authors should provide instructions on data access and preparation, including how to access the raw data, preprocessed data, intermediate data, and generated data, etc.
        \item The authors should provide scripts to reproduce all experimental results for the new proposed method and baselines. If only a subset of experiments are reproducible, they should state which ones are omitted from the script and why.
        \item At submission time, to preserve anonymity, the authors should release anonymized versions (if applicable).
        \item Providing as much information as possible in supplemental material (appended to the paper) is recommended, but including URLs to data and code is permitted.
    \end{itemize}

\item {\bf Experimental setting/details}
    \item[] Question: Does the paper specify all the training and test details (e.g., data splits, hyperparameters, how they were chosen, type of optimizer) necessary to understand the results?
    \item[] Answer: \answerYes{} 
    \item[] Justification: Training details including the loss function, gating mechanism, and optimization procedure are described in Section 3. Noise adding procedure, evaluation metrics, and dataset restrictions are specified in Section 4, with additional details in Appendix E.
    \item[] Guidelines:
    \begin{itemize}
        \item The answer \answerNA{} means that the paper does not include experiments.
        \item The experimental setting should be presented in the core of the paper to a level of detail that is necessary to appreciate the results and make sense of them.
        \item The full details can be provided either with the code, in appendix, or as supplemental material.
    \end{itemize}

\item {\bf Experiment statistical significance}
    \item[] Question: Does the paper report error bars suitably and correctly defined or other appropriate information about the statistical significance of the experiments?
    \item[] Answer: \answerYes{} 
    \item[] Justification: All results on ground-truth datasets are averaged over three independent trials as stated in Section 4. For black-box datasets, we report median $R^2$ and median complexity following the SRBench evaluation protocol.
    \item[] Guidelines:
    \begin{itemize}
        \item The answer \answerNA{} means that the paper does not include experiments.
        \item The authors should answer \answerYes{} if the results are accompanied by error bars, confidence intervals, or statistical significance tests, at least for the experiments that support the main claims of the paper.
        \item The factors of variability that the error bars are capturing should be clearly stated (for example, train/test split, initialization, random drawing of some parameter, or overall run with given experimental conditions).
        \item The method for calculating the error bars should be explained (closed form formula, call to a library function, bootstrap, etc.)
        \item The assumptions made should be given (e.g., Normally distributed errors).
        \item It should be clear whether the error bar is the standard deviation or the standard error of the mean.
        \item It is OK to report 1-sigma error bars, but one should state it. The authors should preferably report a 2-sigma error bar than state that they have a 96\% CI, if the hypothesis of Normality of errors is not verified.
        \item For asymmetric distributions, the authors should be careful not to show in tables or figures symmetric error bars that would yield results that are out of range (e.g., negative error rates).
        \item If error bars are reported in tables or plots, the authors should explain in the text how they were calculated and reference the corresponding figures or tables in the text.
    \end{itemize}

\item {\bf Experiments compute resources}
    \item[] Question: For each experiment, does the paper provide sufficient information on the computer resources (type of compute workers, memory, time of execution) needed to reproduce the experiments?
    \item[] Answer: \answerYes{} 
    \item[] Justification: Training time is reported for all experiments in Section 4 and Appendix E. Compute resources and hardware details are provided in Appendix E.
    \item[] Guidelines:
    \begin{itemize}
        \item The answer \answerNA{} means that the paper does not include experiments.
        \item The paper should indicate the type of compute workers CPU or GPU, internal cluster, or cloud provider, including relevant memory and storage.
        \item The paper should provide the amount of compute required for each of the individual experimental runs as well as estimate the total compute. 
        \item The paper should disclose whether the full research project required more compute than the experiments reported in the paper (e.g., preliminary or failed experiments that didn't make it into the paper). 
    \end{itemize}
    
\item {\bf Code of ethics}
    \item[] Question: Does the research conducted in the paper conform, in every respect, with the NeurIPS Code of Ethics \url{https://neurips.cc/public/EthicsGuidelines}?
    \item[] Answer: \answerYes{}
    \item[] Justification: The research conforms with the NeurIPS Code of Ethics and all the datasets used are publicly available benchmarks.
    \item[] Guidelines:
    \begin{itemize}
        \item The answer \answerNA{} means that the authors have not reviewed the NeurIPS Code of Ethics.
        \item If the authors answer \answerNo, they should explain the special circumstances that require a deviation from the Code of Ethics.
        \item The authors should make sure to preserve anonymity (e.g., if there is a special consideration due to laws or regulations in their jurisdiction).
    \end{itemize}

\item {\bf Broader impacts}
    \item[] Question: Does the paper discuss both potential positive societal impacts and negative societal impacts of the work performed?
    \item[] Answer: \answerNA{}
    \item[] Justification: This work proposes a method for symbolic regression, a fundamental scientific discovery tool. It poses no direct negative societal impact beyond those common to general machine learning research.
    \item[] Guidelines:
    \begin{itemize}
        \item The answer \answerNA{} means that there is no societal impact of the work performed.
        \item If the authors answer \answerNA{} or \answerNo, they should explain why their work has no societal impact or why the paper does not address societal impact.
        \item Examples of negative societal impacts include potential malicious or unintended uses (e.g., disinformation, generating fake profiles, surveillance), fairness considerations (e.g., deployment of technologies that could make decisions that unfairly impact specific groups), privacy considerations, and security considerations.
        \item The conference expects that many papers will be foundational research and not tied to particular applications, let alone deployments. However, if there is a direct path to any negative applications, the authors should point it out. For example, it is legitimate to point out that an improvement in the quality of generative models could be used to generate Deepfakes for disinformation. On the other hand, it is not needed to point out that a generic algorithm for optimizing neural networks could enable people to train models that generate Deepfakes faster.
        \item The authors should consider possible harms that could arise when the technology is being used as intended and functioning correctly, harms that could arise when the technology is being used as intended but gives incorrect results, and harms following from (intentional or unintentional) misuse of the technology.
        \item If there are negative societal impacts, the authors could also discuss possible mitigation strategies (e.g., gated release of models, providing defenses in addition to attacks, mechanisms for monitoring misuse, mechanisms to monitor how a system learns from feedback over time, improving the efficiency and accessibility of ML).
    \end{itemize}
    
\item {\bf Safeguards}
    \item[] Question: Does the paper describe safeguards that have been put in place for responsible release of data or models that have a high risk for misuse (e.g., pre-trained language models, image generators, or scraped datasets)?
    \item[] Answer: \answerNA{}
    \item[] Justification: The proposed method is a symbolic regression framework that does not pose risks of misuse. It does not involve pre-trained language models, image generators, or scraped datasets.
    \item[] Guidelines:
    \begin{itemize}
        \item The answer \answerNA{} means that the paper poses no such risks.
        \item Released models that have a high risk for misuse or dual-use should be released with necessary safeguards to allow for controlled use of the model, for example by requiring that users adhere to usage guidelines or restrictions to access the model or implementing safety filters. 
        \item Datasets that have been scraped from the Internet could pose safety risks. The authors should describe how they avoided releasing unsafe images.
        \item We recognize that providing effective safeguards is challenging, and many papers do not require this, but we encourage authors to take this into account and make a best faith effort.
    \end{itemize}

\item {\bf Licenses for existing assets}
    \item[] Question: Are the creators or original owners of assets (e.g., code, data, models), used in the paper, properly credited and are the license and terms of use explicitly mentioned and properly respected?
    \item[] Answer: \answerYes{}
    \item[] Justification: All datasets and benchmarks used are properly cited. SRBench and PMLB are publicly available open-source resources, and all baseline methods are credited with their original publications.
    \item[] Guidelines:
    \begin{itemize}
        \item The answer \answerNA{} means that the paper does not use existing assets.
        \item The authors should cite the original paper that produced the code package or dataset.
        \item The authors should state which version of the asset is used and, if possible, include a URL.
        \item The name of the license (e.g., CC-BY 4.0) should be included for each asset.
        \item For scraped data from a particular source (e.g., website), the copyright and terms of service of that source should be provided.
        \item If assets are released, the license, copyright information, and terms of use in the package should be provided. For popular datasets, \url{paperswithcode.com/datasets} has curated licenses for some datasets. Their licensing guide can help determine the license of a dataset.
        \item For existing datasets that are re-packaged, both the original license and the license of the derived asset (if it has changed) should be provided.
        \item If this information is not available online, the authors are encouraged to reach out to the asset's creators.
    \end{itemize}

\item {\bf New assets}
    \item[] Question: Are new assets introduced in the paper well documented and is the documentation provided alongside the assets?
    \item[] Answer: \answerNA{}
    \item[] Justification: The paper does not introduce new datasets or pre-trained models. The code will be released upon acceptance with accompanying documentation.
    \item[] Guidelines:
    \begin{itemize}
        \item The answer \answerNA{} means that the paper does not release new assets.
        \item Researchers should communicate the details of the dataset\slash code\slash model as part of their submissions via structured templates. This includes details about training, license, limitations, etc. 
        \item The paper should discuss whether and how consent was obtained from people whose asset is used.
        \item At submission time, remember to anonymize your assets (if applicable). You can either create an anonymized URL or include an anonymized zip file.
    \end{itemize}

\item {\bf Crowdsourcing and research with human subjects}
    \item[] Question: For crowdsourcing experiments and research with human subjects, does the paper include the full text of instructions given to participants and screenshots, if applicable, as well as details about compensation (if any)? 
    \item[] Answer: \answerNA{}
    \item[] Justification: The paper does not involve crowdsourcing or research with human subjects.
    \item[] Guidelines:
    \begin{itemize}
        \item The answer \answerNA{} means that the paper does not involve crowdsourcing nor research with human subjects.
        \item Including this information in the supplemental material is fine, but if the main contribution of the paper involves human subjects, then as much detail as possible should be included in the main paper. 
        \item According to the NeurIPS Code of Ethics, workers involved in data collection, curation, or other labor should be paid at least the minimum wage in the country of the data collector. 
    \end{itemize}

\item {\bf Institutional review board (IRB) approvals or equivalent for research with human subjects}
    \item[] Question: Does the paper describe potential risks incurred by study participants, whether such risks were disclosed to the subjects, and whether Institutional Review Board (IRB) approvals (or an equivalent approval/review based on the requirements of your country or institution) were obtained?
    \item[] Answer: \answerNA{}
    \item[] Justification: The paper does not involve research with human subjects.
    \item[] Guidelines:
    \begin{itemize}
        \item The answer \answerNA{} means that the paper does not involve crowdsourcing nor research with human subjects.
        \item Depending on the country in which research is conducted, IRB approval (or equivalent) may be required for any human subjects research. If you obtained IRB approval, you should clearly state this in the paper. 
        \item We recognize that the procedures for this may vary significantly between institutions and locations, and we expect authors to adhere to the NeurIPS Code of Ethics and the guidelines for their institution. 
        \item For initial submissions, do not include any information that would break anonymity (if applicable), such as the institution conducting the review.
    \end{itemize}

\item {\bf Declaration of LLM usage}
    \item[] Question: Does the paper describe the usage of LLMs if it is an important, original, or non-standard component of the core methods in this research? Note that if the LLM is used only for writing, editing, or formatting purposes and does \emph{not} impact the core methodology, scientific rigor, or originality of the research, declaration is not required.
    \item[] Answer: \answerNA{}
    \item[] Justification: LLMs were used only for editing and polishing the manuscript text. They are not a component of the core methodology.
    \item[] Guidelines:
    \begin{itemize}
        \item The answer \answerNA{} means that the core method development in this research does not involve LLMs as any important, original, or non-standard components.
        \item Please refer to our LLM policy in the NeurIPS handbook for what should or should not be described.
    \end{itemize}

\end{enumerate}

\end{document}